\documentclass[journal]{IEEEtran}

\usepackage{float}
\usepackage{url}
\usepackage{graphicx}
\usepackage{mathtools, amsfonts, amsthm}
\usepackage{cuted} %
\usepackage{dsfont}
\usepackage{subdepth} %
\usepackage{float}
\usepackage{subcaption}
\usepackage{hyperref}

\usepackage[acronym, nowarn]{glossaries}
\glsdisablehyper %

\usepackage{tikz}
\usepackage{pgfplots}
\pgfplotsset{compat=newest}
\usetikzlibrary{external}
\usetikzlibrary{angles,quotes}

\usepackage{tabularray}
\UseTblrLibrary{booktabs}

\usepackage[ruled,vlined]{algorithm2e}%
\DontPrintSemicolon
\makeatletter
\newcommand{\removelatexerror}{\let\@latex@error\@gobble}
\makeatother

\usepackage[noabbrev, capitalise, sort]{cleveref}

\usepackage[backend=biber, style=ieee, isbn=false, doi=false, url=false]{biblatex}

\bibliography{references.bib}
\setcounter{biburllcpenalty}{7000} %
\setcounter{biburlucpenalty}{8000}

\hyphenation{op-tical net-works semi-conduc-tor IEEE-Xplore}

\theoremstyle{plain}

\newtheorem{definition}{Definition}
\newtheorem{theorem}{Theorem}

\newlength{\figurewidth}
\newlength{\figureheight}

\setlength{\figurewidth}{0.49\textwidth}
\setlength{\figureheight}{0.315\textheight}

\definecolor{matlabblue}{rgb}{0 0.4470 0.741}
\definecolor{matlaborange}{rgb}{0.8500 0.3250 0.0980}
\definecolor{matlabyellow}{rgb}{0.9290 0.6940 0.1250}
\definecolor{matlabpurple}{rgb}{0.4940 0.1840 0.5560}
\definecolor{matlabgreen}{rgb}{0.4660 0.6740 0.1880}
\definecolor{matlablightblue}{rgb}{0.3010 0.7450 0.9330}
\definecolor{matlabred}{rgb}{0.6350 0.0780 0.1840}

\pgfplotscreateplotcyclelist{matlabcolor}{%
	matlabblue, mark=diamond*\\%
	matlaborange, mark=*\\%
	matlabyellow, mark=square*\\%
	matlabpurple, mark=triangle*\\%
	matlabgreen, mark=star\\%
	matlablightblue\\%
	matlabred\\%
}

\pgfplotscreateplotcyclelist{matlabcolor-3}{%
	matlabblue, mark=diamond*\\%
	matlaborange, mark=*\\%
	matlabyellow, mark=square*\\%
	matlabblue, mark=triangle*\\%
	matlaborange, mark=star\\%
	matlabyellow, mark=o\\%
}

\newcommand{\bone}{\boldsymbol{1}}
\newcommand{\ba}{\boldsymbol{a}}
\newcommand{\bb}{\boldsymbol{b}}

\newcommand{\bd}{\boldsymbol{d}}
\newcommand{\bp}{\boldsymbol{p}}

\newcommand{\bw}{\boldsymbol{w}}
\newcommand{\bx}{\boldsymbol{x}}
\newcommand{\by}{\boldsymbol{y}}
\newcommand{\bz}{\boldsymbol{z}}
\newcommand{\bA}{\boldsymbol{A}}

\newcommand{\bI}{\boldsymbol{I}}

\newcommand{\bS}{\boldsymbol{S}}

\newcommand{\bV}{\boldsymbol{V}}

\newcommand{\bLambda}{\boldsymbol{\Lambda}}

\newcommand{\bmu}{\boldsymbol{\mu}}
\newcommand{\bphi}{\boldsymbol{\phi}}

\newcommand{\bsigma}{\boldsymbol{\sigma}}

\newcommand{\bDelta}{\boldsymbol{\Delta}}

\DeclareMathOperator*{\argmax}{arg\,max}
\DeclareMathOperator*{\argmin}{arg\,min}

\newcommand{\expect}{\mathbb{E}}

\newacronym{sc}{SC}{Sensitivity Curve}
\newacronym{atc}{ATC}{Adapt-then-Combine}
\newacronym{ges}{GES}{gross-error sensitivity}
\newacronym{are}{ARE}{asymptotic relative efficiency}

\newacronym{mpe}{MPE}{Multivariate Power Exponential}

\newacronym{scm}{SCM}{Sensitivity Curve Maximization}
\newacronym{ascm}{ASCM}{Aligned Sensitivity Curve Maximization}
\newacronym{sascm}{SASCM}{Simplified Aligned Sensitivity Curve Maximization}
\newacronym{ios}{IOS}{Iterative Outlier Scissor}
\newacronym{alie}{ALIE}{'A Little Is Enough'}
\newacronym{scc}{SCC}{Self-Centered-Clipping}
\newacronym{faba}{FABA}{'Fast Aggregation algorithm against Byzantine Attacks'}
\newacronym{rfa}{RFA}{Robust Federated Aggregation}
\newacronym{rop}{ROP}{Relocated Orthogonal Pertupation}
\newacronym{bridge}{BRIDGE}{Byzantine-resilient decentralized gradient descent}
\newacronym{byrdie}{ByRDiE}{Byzantine-resilient distributed coordinate descent}
\newacronym{rtc}{RTC}{Remove-then-Clip}
\newacronym{ipm}{IPM}{Inner Product Manipulation}
\newacronym{lv}{LV}{Large Value}

\newacronym{mnist}{MNIST}{Modified National Institute of Standards and Technology}
\newacronym{cifar}{CIFAR}{Canadian Institute for Advanced Research}

\makeglossaries

\begin{document}
\title{Sensitivity Curve Maximization: Attacking Robust Aggregators in Distributed Learning}
\author{Christian A. Schroth, Stefan Vlaski and Abdelhak M. Zoubir
\author{Christian A. Schroth, Stefan Vlaski, \IEEEmembership{Member, IEEE}, Abdelhak M. Zoubir, \IEEEmembership{Life Fellow, IEEE}
	\thanks{C. A. Schroth and A. M. Zoubir have been funded by DFG project grant ZO 215/19-1.}
	\thanks{Christian A. Schroth and Abdelhak M. Zoubir are with the Signal Processing Group, Technische Universität Darmstadt, Germany, \{schroth, zoubir\}@spg.tu-darmstadt.de. }
	\thanks{Stefan Vlaski is with the Department of Electrical and Electronic Engineering, Imperial College London, SW7 2AZ London, U.K., s.vlaski@imperial.ac.uk.}
	\thanks{A preliminary version of this work appeared in the conference publication~\cite{Schroth.2023}.}
}}

\maketitle

\begin{abstract}
In distributed learning agents aim at collaboratively solving a global learning problem. It becomes more and more likely that individual agents are malicious or faulty with an increasing size of the network. This leads to a degeneration or complete breakdown of the learning process. Classical aggregation schemes are prone to breakdown at small contamination rates, therefore robust aggregation schemes are sought for. While robust aggregation schemes can generally tolerate larger contamination rates, many have been shown to be susceptible to carefully crafted malicious attacks. In this work, we show how the sensitivity curve (SC), a classical tool from robust statistics, can be used to systematically derive optimal attack patterns against arbitrary robust aggregators, in most cases rendering them ineffective. We show the effectiveness of the proposed attack in multiple simulations.
\end{abstract}

\begin{IEEEkeywords}
Sensitivity curve, decentralized learning, federated learning, robust aggregation, byzantine robustness, robust distributed learning.
\end{IEEEkeywords}

\section{Introduction}

\IEEEPARstart{D}{istributed} learning paradigms, such as federated or decentralized learning\footnote{We use “distributed” for any structure where data remains local at individual agents, which includes federated and decentralized architectures. The term “decentralized” is used for a network without fusion center. In the literature, these terms are sometimes used interchangeably.}, are an emerging technique to efficiently handle large amounts of data in dispersed locations. In cooperative networks, distributed algorithms can match the performance of centralized algorithms, with data access at a single location \cite{Vlaski.2023a, Sayed.2014, Lian.2017, Nassif.2020}. In practice, malicious (also called byzantine \cite{Lamport.1982}) agents may interfere with the efficient, but non-robust model aggregation step used in cooperative settings. Counteracting the effect of byzantine agents has given rise to byzantine robust distributed learning~\cite{Guerraoui.2024a}. 

There exist two popular distributed learning concepts: federated \cite{McMahan.2017,Rizk.2022, Sanchez.2024} and decentralized \cite{Vlaski.2021a} learning. In federated learning, all agents communicate with a central fusion center, which performs the model aggregation and distribution. In decentralized learning, agents communicate in a peer-to-peer fashion and each agent performs its own model aggregation.

There exist a large amount of robust aggregation methods for \emph{federated} learning. Most methods rely on a robust estimation of the mean \cite{Peng.2024}, e.g. the trimmed-mean, the median \cite{Yin.2018}, Krum \cite{Blanchard.2017} or the geometric median \cite{Pillutla.2022}. Other methods are based on the Huber loss \cite{Zhao.2024}, game theory \cite{Xie.2024}, randomization \cite{Ramezani-Kebrya.2022, Ozfatura.2024}, variance reduction \cite{Zhang.2023, Gorbunov.2023} or clipping \cite{Karimireddy.2021}. More methods can be found in the surveys \cite{Rodriguez-Barroso.2023, Moshawrab.2023, Khan.2023}. While not necessarily derived for decentralized settings, most of these schemes can be adapted to decentralized peer-to-peer aggregation.

The literature of tailor-made schemes for robust \emph{decentralized} optimization is more sparse. There exist methods based on the trimmed-mean, the median and Krum \cite{Sahoo.2022}, e.g., \gls{bridge} \cite{Yang.2019} and \gls{byrdie} \cite{Fang.2022}, based on clipping, e.g. \gls{rtc} \cite{Yang.2024a} and \gls{scc} \cite{He.2023} and based on the cosine similarity \cite{Ghavamipour.2024}. Other methods iteratively discard a certain amount of samples \cite{Wu.2023} or combine variance reduction with clipping \cite{Yu.2023}. A class of robust and efficient M-estimation based coordinate-wise robust aggregators are presented in \cite{Vlaski.2022, Schroth.2023a}. In \cite{Wang.2023a}, the network topology is robustified by constructing clusters based on the risk profiles of individual agents.

Attacks on robust aggregation schemes can be broadly classified into three categories. Firstly, naive attacks which do not take into account the underlying structure of the data or aggregation scheme, e.g. the addition of Gaussian noise, label flipping, sign flipping \cite{Khan.2023} or the addition of large values \cite{Schroth.2023, Ghavamipour.2024}. Most robust aggregation schemes are effective in defending against such attacks. But these attacks have the advantage that they do require only little additional information about the honest users. 
Secondly, there are more sophisticated attacks which try to exploit the underlying data or model, e.g. data and model poisoning attacks \cite{Fang.2020, Shejwalkar.2021}, the backdoor attack \cite{Wang.2020}, the echo attack \cite{Pasquini.2023, Ye.2024} or \gls{ipm} \cite{Xie.2020}. A very common attack in this category is \gls{alie}, which crafts a specific value, which is small enough to escape robust aggregation methods, but large enough to disturb the aggregation \cite{Baruch.2019}. These attack schemes require knowledge of the underlying data and model weights, but do not need to know the deployed aggregation scheme to effectively craft outliers.
Thirdly, there are tailored attacks which attack a specific robust aggregation scheme by exploiting its weaknesses, e.g. \gls{rop}, which is designed to circumvent centered clipping by estimating the center and injecting an update orthogonal to the honest update direction \cite{Ozfatura.2024}. These attack schemes are very effective against a certain aggregation types, but may fail when used against arbitrary aggregation schemes.
Surveys of different attack schemes can be found in \cite{Lyu.2020, Nair.2023}. 

To the best of our knowledge, there does not exist a design method which can provide a powerful attack against arbitrary robust aggregation schemes. The attack schemes listed above are either limited in effectiveness or versatility, hence, are not good candidates to develop a general attack framework. In this work, we propose a general attack design framework based on criteria from classical robust statistics, i.e. using methods which were designed to measure robustness to design a powerful attack.

There exist multiple well-known metrics to quantify the robustness of estimators, i.e. the influence function, the \gls{sc}, the breakdown point, the maximum-bias curve or the \gls{are} \cite{Zoubir.2012, Zoubir.2018, Maronna.2019}. In a prior work \cite{Schroth.2023}, we presented preliminary results to systematically analyze coordinate-wise robust aggregation methods. Therein, we used classical methods from robust statistics, such as the \gls{sc}, proposed by Tukey \cite{Tukey.1977, Huber.2002, Andrews.1972} for the study of the finite sample behavior of estimators. Based on the analysis of coordinate-wise robust aggregation methods, a new attack was proposed, which searches for the data point that maximizes the \gls{sc} and then injects this data point into the aggregation process. We call it \gls{scm}. It was shown that \gls{scm} breaks every analyzed coordinate-wise aggregation method. In this paper, we build upon this insight and extend \gls{scm} to multivariate aggregation schemes. We also ensure the alignment of the attack direction over multiple training rounds with \gls{ascm} and \gls{sascm}. Further, we adapt multiple robust aggregation schemes from federated learning to decentralized learning and analyze their respective \glspl{sc}. Our contributions are:
\begin{itemize}
	\item Based on the recently developed \gls{scm} attack, we develop novel \gls{ascm} and \gls{sascm} attacks, which combine the maximization of the \gls{sc} with the alignment of the attack over time.
	\item We perform a systematic analysis of the 1D and 2D-\glspl{sc} of various robust aggregation schemes and propose tailored \gls{scm}/\gls{sascm} attack schemes.
	\item We deploy and analyze multivariate M-estimators in robust aggregation.
	\item While we keep our arguments general to cover arbitrary decentralized topologies, when considering a fully connected graph, we also include federated learning.
\end{itemize}
The remainder of the paper is organized as follows. In \cref{sec:learning} decentralized learning and robust aggregation is introduced. The classical \gls{sc} and our extended definition is given in \cref{sec:sc}. \cref{sec:scm} derives and motivates the proposed \gls{ascm} and \gls{sascm} attacks. In \cref{sec:aggregators}, an overview of existing robust aggregation schemes and our adaption of multivariate M-estimators, including an analysis of all aggregation schemes using 1D- and 2D-\glspl{sc}, is given. Existing attack schemes are presented in \cref{sec:attacks} and new specific \gls{sascm} attacks in \cref{sec:example_attacks}. Finally, simulation results are shown in \cref{sec:simulations} and a conclusion and outlook are given in \cref{sec:conclusion}.

\emph{Notation:} Normal-font letters ($a$, $A$) denote a scalar, bold  lowercase ($\ba$) a vector and bold uppercase ($\bA$) a matrix;  calligraphic letters ($\mathcal{A}$) denote a set,  $\mathbb{R}$ denotes  the set of real numbers and $\mathbb{R}^{r \times 1}$, $\mathbb{R}^{r \times r}$ the set of column  vectors of size $r \times 1$, matrices of size $r \times r$, respectively; $\bA^{-1}$ is the matrix inverse; $\bA^{\top}$ is the matrix transpose; $|a|$ is the  absolute value of a scalar; $|\mathcal{A}|$ is the cardinality of a set; $\lVert \ba \rVert$ denotes the euclidean norm of a vector; $\cos\left( \ba, \bb\right) = (\ba^{\top} \bb) / (\lVert\ba\rVert \lVert\bb\rVert)$ denotes the cosine similarity.

\section{Decentralized Learning and Robust Aggregation}
\label{sec:learning}

In a general distributed learning problem, a collection of $K$ agents aims to collaboratively solve a stochastic optimization problem
\begin{equation}
	\bw^{\circ} = \argmin_{\bw \in \mathbb{R}^{r \times 1}} \frac{1}{K} \sum_{k=1}^{K} J_{k}(\bw)
\end{equation}
with the local objective function $J_{k}(\bw) = \expect[Q(\bw; \bx_{k})]$. Where $\bx_{k} \in \mathbb{R}^{r \times 1}$ denotes a random variable of dimension $r$ representing the data available at agent $k$ and $Q(\bw; \bx_{k})$ denotes the associated loss.

In decentralized learning, agents only exchange intermediate estimates on a peer-to-peer basis, without communicating with a central fusion center. For example, the \gls{atc} diffusion algorithm takes the form \cite{Sayed.2014, Vlaski.2022, Djuric.2018}
\begin{align}
	\bphi_{k,i} =&\: \bw_{k,i-1} - \mu \widehat{\nabla J}_{k}(\bw_{k,i-1})\label{eqn:adapt}\\
	\bw_{k,i} =&\: \sum_{\ell \in \mathcal{N}_{k}} a_{\ell k} \bphi_{\ell,i}
	\label{eqn:ATC}
\end{align} 
with the stochastic gradient approximation $\widehat{\nabla J}_{k}(\bw_{k,i-1})$, weights $a_{\ell k}$, step-size $\mu > 0$ and the closed neighborhood of agent $k$ by $\mathcal{N}_{k}$, where closed indicates that agent $k$ is included in $\mathcal{N}_{k}$. Commonly, $\widehat{\nabla J}_{k}(\bw_{k,i-1}) = \nabla Q(\bw_{k, i-1}; \bx_{k, i})$ is chosen with $\bx_{k, i}$ denoting the sample available at agent $k$ at time $i$. The graph is described by a left-stochastic weight matrix $\bA \in \mathbb{R}^{K \times K}$ with entries 
\begin{equation}
	a_{\ell k} \begin{cases}
		> 0, &\text{if } \ell \in \mathcal{N}_{k} \\
		= 0, &\text{else}
	\end{cases}
\end{equation}
and
\begin{equation}
	\sum_{\ell \in \mathcal{N}_{k} } a_{\ell k} = 1.
\end{equation}
Two common combination rules \cite[p.~77]{Djuric.2018} are the uniform averaging rule
\begin{equation}
	a_{\ell k} = \frac{1}{|\mathcal{N}_{k}|}, \quad \ell \in \mathcal{N}_{k}
	\label{eqn:uni_avg}
\end{equation}
which leads to a left-stochastic weight matrix $\bA$ and the Metropolis rule
\begin{equation}
	a_{\ell k} = \begin{cases}
		\frac{1}{\max\left(|\mathcal{N}_{k}|, |\mathcal{N}_{\ell}|\right)}, & \ell \in \mathcal{N}_{k} \backslash \{k\} \\
		1 - \sum\limits_{m \in \mathcal{N}_{k} \backslash {k}} a_{mk}, & \ell = k
	\end{cases}
	\label{eqn:metropolis}
\end{equation}
which leads to a doubly-stochastic weight matrix $\bA$. In collaborative scenarios without any byzantine agents, the choice of the above combination policies is well justified. This is because they lead to sufficiently high efficiency and fast convergence rates. For example, in a scenario without outliers, the mean can be observed to converge fastest as demonstrated in \cref{fig:sim_no_out_1}. In non-collaborative scenarios, the agents in the network are split into a honest $\mathcal{H}$ and a byzantine $\mathcal{B}$ subset. Similarly, the neighborhood $\mathcal{N}_{k} = \mathcal{H}_{k}\cup \mathcal{B}_{k}$ of agent $k$ is split into a honest $\mathcal{H}_{k}$ (including agent $k$) and a byzantine $\mathcal{B}_{k}$ neighborhood. The aggregating agent $k$ receives from its neighbors the intermediate update
\begin{align} \bphi_{\ell,i} = 
	\begin{cases}
		\bphi_{\ell,i}, & \ell \in \mathcal{H}_{k} \\
		*, & \ell \in \mathcal{B}_{k},
	\end{cases}
\end{align} 
where $*$ stands for an arbitrarily and maliciously crafted sample from a byzantine agent. As the averaging-based aggregation schemes are highly susceptible to this kind of malicious samples, the non-robust combination rule in Equation~\eqref{eqn:ATC} has to be replaced with a general, preferably robust, aggregation rule
\begin{align}
	\bw_{k,i} = \mathbf{agg}\left(\{\bphi_{\ell,i}\}_{\ell \in \mathcal{N}_{k}}\right) \in \mathbb{R}^{r \times 1}.
	\label{eqn:robAGG}
\end{align}
This robust aggregation rule is able to reduce or completely eliminate the influence of malicious samples. The complete decentralized learning procedure with \gls{atc}-diffusion and robust aggregation is summarized in \cref{alg:dec_sgd}.

\begin{figure}[t]
	\removelatexerror
	\begin{algorithm}[H]
		Initialize $\{\bw_{k,0}\}_{k \in \mathcal{H}}$ \;
		\For{$i = 1,2,\dots$}
		{%
			\For{$k \in \mathcal{H}$}
			{%
				Compute local gradient and update local weight\;
				\[\bphi_{k,i} = \bw_{k,i-1} - \mu \widehat{\nabla J}_{k}(\bw_{k,i-1})\]
				Send weight $\bphi_{k,i}$ to $\mathcal{N}_{k}$ \;
			}
			\For{$k \in \mathcal{B}$}
			{%
				Craft malicious weight $\bphi_{k,i} = *$\;
				Send malicious weights to all agents $\ell$, such that $k \in \mathcal{N}_{\ell}$ \;
			}
			\For{$k \in \mathcal{H}$}
			{%
				Receive weights $\{\bphi_{\ell,i}\}_{\ell \in \mathcal{N}_{k}}$ \;
				Robustly aggregate received weights\;
				\[\bw_{k,i} = \mathbf{agg}\left(\{\bphi_{\ell,i}\}_{\ell \in \mathcal{N}_{k}}\right)\]
			}
		}
		\caption{ATC diffusion with robust aggregation}
		\label{alg:dec_sgd}
	\end{algorithm}
\end{figure}

\section{Sensitivity Curve}
\label{sec:sc}
The \gls{sc} was proposed by Tukey \cite{Tukey.1977, Huber.2002, Andrews.1972} to study the finite sample behavior of estimators. It can also be used to measure the impact of an outlier on the aggregation result. To simplify the notation, we denote the weights of the honest agents $\{\bphi_{\ell,i}\}_{\ell \in \mathcal{H}_{k}}$ as $\mathcal{Y}$ ($\mathcal{Y}_{i}$ at time $i$) and the weights of the byzantine agents $\{\bphi_{\ell,i}\}_{\ell \in \mathcal{B}_{k}}$ as $\mathcal{Z}$ ($\mathcal{Z}_{i}$ at time $i$).

For an aggregator $\mathbf{agg}(\cdot)$ and a set of scalar samples $\mathcal{Y} = \{y_{1}, \dots, y_{N-1}\}$ of size $N-1$, the \gls{sc} is defined~as
\begin{equation}
	\text{sc}(\mathcal{Y}, z) = N \left(\mathbf{agg}(\mathcal{Y} \cup z) - \mathbf{agg}(\mathcal{Y})\right) \in \mathbb{R}^{1 \times 1},
\end{equation}
where it describes the sensitivity of an aggregator $\mathbf{agg}(\cdot)$ to an additional sample $z$. It has to be noted that the value of the \gls{sc} depends on the underlying sample $\mathcal{Y}$. Thus, the value of the \gls{sc} fluctuates for different samples $\mathcal{Y}$.

We extend the definition of the \gls{sc} to account for $P$ identical multivariate outliers $\mathcal{Z} = \{\bz, \dots, \bz\}$, where $\bz \in \mathbb{R}^{r \times 1}$ is repeated $P$-times in $\mathcal{Z}$. The uncontaminated sample becomes $\mathcal{Y} = \{\by_{1}, \dots, \by_{N-P}\}$, $\by_{n} \in \mathbb{R}^{r \times 1}$ and the multivariate \gls{sc} becomes
\begin{equation}
	\mathbf{sc}(\mathcal{Y}, \mathcal{Z}) = N \left(\mathbf{agg}(\mathcal{Y} \cup \mathcal{Z})- \mathbf{agg}(\mathcal{Y})\right) \in \mathbb{R}^{r \times 1}
	\label{eqn:sc}
\end{equation}
Although simplified to multiple identical multivariate outliers, this definition allows for a good tractability and is sufficiently flexible to yield strong attacks, as we will show later.

Some properties of the \gls{sc} are depicted in \cref{fig:GES}, which can be directly translated into important robustness measures \cite[Chapter~2.1c]{Hampel.1986}. The \gls{ges} describes the worst influence, which a set of outliers can have on the result of the aggregation. To achieve robustness the \gls{ges} has to be finite and preferably small. As pointed out in \cite{Hampel.1986}, decreasing the \gls{ges} will decrease the efficiency and vice versa. Hence, the aggregator with the highest efficiency (sample mean) has an infinite \gls{ges} and the aggregator with the lowest \gls{ges} has a zero efficiency (no aggregation). The classical aggregator with the lowest non-zero \gls{ges} is the median \cite{Hampel.1986}. There also exists an optimal redescending M-estimator which minimizes the \gls{ges} while bounding the asymptotic variance \cite[p.~153]{Maronna.2019}. The local-shift sensitivity describes the effect of small fluctuations in the samples. A small change in the samples should also lead to a small change in the \gls{sc}. Therefore, the \gls{sc} should be a continuous function. Lastly, the rejection point describes the point, where a sample is completely rejected from the aggregation as it is too distinct from the norm. Preferably, the rejection point should be finite. The 1D-\glspl{sc} of various aggregators are depicted in \cref{fig:SC,fig:SC2} and the 2D-\glspl{sc} in \cref{fig:2d-sc-1,fig:2d-sc-2,fig:2d-sc-3}.

\begin{figure}[t]
	\centering
	\resizebox{0.8\columnwidth}{!}{
	\includegraphics{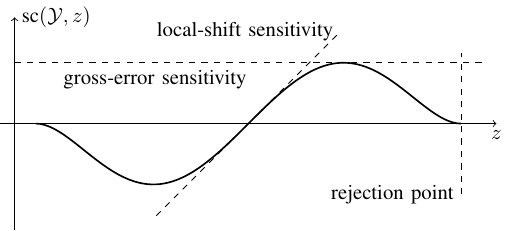}}
	\caption{Properties of the \gls{sc} for $r=1$ (adapted from \cite{Hampel.1986}).}
	\label{fig:GES}
\end{figure}

\section{Optimal Attack Pattern}
\label{sec:scm}
Injecting a well crafted outlier has been shown to be an effective method to circumvent various robust aggregation schemes. The following methods fall under this category. In \cite{Baruch.2019}, the authors propose \gls{alie}, which crafts an outlier that is small enough to not be rejected as an outlier, but large enough to influence the aggregation result. In the MIMIC attack in \cite{Karimireddy.2022}, the attacker copies the sample from a benign agent to introduce a bias towards this agent. In the Gaussian attack \cite{Ghavamipour.2024}, the attacker sends a sample randomly drawn from a Gaussian distribution.

These attacks have in common that they are based on heuristics which seem to work well, but might not be optimal. To be able to provide new justifications for the effectiveness of existing attack schemes, and systematically develop new attacks for arbitrary aggregation schemes, we propose the following definition of an optimal attack:
\begin{definition}[Optimal Attack]
	For a given aggregator $\mathbf{agg}(\cdot)$ and a given sample $\mathcal{Y}$, the sample $\mathcal{Z}^{\star}$ which maximizes the distance $\mathrm{dist}(\cdot)$ between the attacked aggregation result $\mathbf{agg}(\mathcal{Y} \cup \mathcal{Z})$ and the benign aggregation result $\mathbf{agg}(\mathcal{Y})$, can be found by solving
	\begin{equation}
		\mathcal{Z}^{\star} = \argmax_{\mathcal{Z}}\left( \mathrm{dist}(\mathbf{agg}(\mathcal{Y} \cup \mathcal{Z}), \mathbf{agg}(\mathcal{Y}))\right).
	\end{equation}
	\label{def:opt_attack}
\end{definition}
Intuitively, we want to find the byzantine sample, which maximizes the distance between the benign aggregation result and the attacked aggregation result. Choosing the squared Euclidean distance $\mathrm{dist}(\ba, \bb) = \lVert\ba-\bb\rVert^{2}$ and restricting the set of outliers $\mathcal{Z}$ to only contain $P$ identical outliers, a new attack scheme, which we will call \glsreset{scm}\gls{scm}, can be found:
\begin{theorem}[Sensitivity Curve Maximization]
	For a given aggregator $\mathbf{agg}(\cdot)$ and a given sample $\mathcal{Y}$, the sample $\mathcal{Z}^{\star} = \{\bz^{\star}, \dots, \bz^{\star}\}$  with $P$ identical samples, which maximizes the distance $\mathrm{dist}(\mathbf{agg}(\mathcal{Y} \cup \mathcal{Z}), \mathbf{agg}(\mathcal{Y}))$ for $\mathrm{dist}(\ba, \bb) = \lVert\ba-\bb\rVert^{2}$, can be found by solving
	\begin{equation}
		\mathcal{Z}^{\star} = \argmax_{\mathcal{Z}} \left\lVert\mathbf{sc}(\mathcal{Y}, \mathcal{Z})\right\rVert^{2}.
	\end{equation}
	\label{th:SCM}
\end{theorem}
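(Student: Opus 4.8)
The plan is to reduce the optimization in \cref{def:opt_attack} directly to the definition of the multivariate sensitivity curve in \eqref{eqn:sc}, exploiting that, once the number $P$ of injected outliers is fixed, the normalizing factor $N$ appearing in \eqref{eqn:sc} is a positive constant that does not depend on the optimization variable. In other words, the whole statement is a change of scale applied to the objective of the optimal attack, and the task is to make that precise.

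First I would substitute the chosen distance $\mathrm{dist}(\ba,\bb) = \lVert \ba - \bb\rVert^{2}$ into \cref{def:opt_attack}, so that the objective to be maximized becomes $\lVert \mathbf{agg}(\mathcal{Y}\cup\mathcal{Z}) - \mathbf{agg}(\mathcal{Y})\rVert^{2}$. Then, recalling \eqref{eqn:sc}, namely $\mathbf{sc}(\mathcal{Y},\mathcal{Z}) = N\bigl(\mathbf{agg}(\mathcal{Y}\cup\mathcal{Z}) - \mathbf{agg}(\mathcal{Y})\bigr)$ with $N = |\mathcal{Y}| + P$, I would rewrite $\mathbf{agg}(\mathcal{Y}\cup\mathcal{Z}) - \mathbf{agg}(\mathcal{Y}) = \tfrac{1}{N}\,\mathbf{sc}(\mathcal{Y},\mathcal{Z})$ and hence $\lVert \mathbf{agg}(\mathcal{Y}\cup\mathcal{Z}) - \mathbf{agg}(\mathcal{Y})\rVert^{2} = \tfrac{1}{N^{2}}\lVert \mathbf{sc}(\mathcal{Y},\mathcal{Z})\rVert^{2}$.

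Next I would verify that $N$ is constant over the feasible set, which is the one place where the simplifying hypotheses are genuinely used: the benign sample $\mathcal{Y}$ is fixed with $|\mathcal{Y}| = N-P$, and by the restriction to $P$ identical outliers the only free quantity in $\mathcal{Z} = \{\bz,\dots,\bz\}$ is $\bz \in \mathbb{R}^{r\times 1}$, so that $|\mathcal{Y}\cup\mathcal{Z}| = N$ is unchanged as $\bz$ ranges over $\mathbb{R}^{r\times 1}$. Since $\tfrac{1}{N^{2}} > 0$ is then a fixed positive scalar, multiplying the objective by it does not alter the set of maximizers, so $\argmax_{\mathcal{Z}} \lVert \mathbf{agg}(\mathcal{Y}\cup\mathcal{Z}) - \mathbf{agg}(\mathcal{Y})\rVert^{2} = \argmax_{\mathcal{Z}} \lVert \mathbf{sc}(\mathcal{Y},\mathcal{Z})\rVert^{2}$, which is exactly the claimed characterization of \gls{scm}.

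There is no analytic obstacle here; the argument is essentially one line. The only subtlety worth flagging explicitly is that the equivalence hinges on $N$ being independent of the decision variable, which holds precisely because of the two restrictions imposed just before the theorem ($\mathcal{Y}$ fixed, and $\mathcal{Z}$ consisting of a \emph{fixed} number $P$ of copies of a single point $\bz$). I would note that if one instead allowed $P$ itself to vary, $N$ would depend on $\mathcal{Z}$ and the two argmax problems could genuinely differ, so the theorem should be read as a statement about the restricted attack class introduced in \eqref{eqn:sc}.
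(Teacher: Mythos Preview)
Your proposal is correct and follows essentially the same approach as the paper: substitute the squared Euclidean distance, invoke the definition \eqref{eqn:sc} to rewrite the objective as $\tfrac{1}{N^{2}}\lVert\mathbf{sc}(\mathcal{Y},\mathcal{Z})\rVert^{2}$, and drop the constant factor. Your write-up is in fact slightly more careful than the paper's in making explicit why $N$ is independent of the optimization variable.
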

\begin{proof}
Using the distance measure $\mathrm{dist}(\ba, \bb) = \lVert\ba-\bb\rVert^{2}$ and a set $\mathcal{Z}$ which contains $P$ identical outliers, we find
\begin{align}
	\mathrm{dist}&(\mathbf{agg}(\mathcal{Y} \cup \mathcal{Z}), \mathbf{agg}(\mathcal{Y})) \nonumber\\
	=& \lVert\mathbf{agg}(\mathcal{Y} \cup \mathcal{Z}) - \mathbf{agg}(\mathcal{Y})\rVert^{2} 
	\overset{\eqref{eqn:sc}}{=} \frac{1}{N^{2}}\left\lVert\mathbf{sc}(\mathcal{Y}, \mathcal{Z})\right\rVert^{2},
\end{align}
omitting the factor $\frac{1}{N^{2}}$, we obtain \cref{th:SCM}.
\end{proof}

We now illustrate how several of the effective attack schemes proposed in the literature can be developed in a unified manner using \cref{def:opt_attack} and \cref{th:SCM}. For example, the \gls{lv} attack maximizes the \gls{sc} of the Sample Mean over a bounded set, by selecting an arbitrary large value, as the \gls{sc} of the Sample Mean does linearly increase with the outlier value. \gls{alie} on the other hand does not directly maximize the \gls{sc}, but uses the normal distribution as a surrogate function, to find a value which corresponds to a large \gls{sc} value. More details on these attacks can be found in \cref{sec:attacks}.

For most aggregation schemes there exist multiple samples $\mathcal{Z}^{\star}$; observe for example the multiple maxima in \cref{fig:2d-sc-1,fig:2d-sc-3,fig:2d-sc-2}. The authors of \cite{Ozfatura.2024} show that it is not feasible to randomly select a direction for a given distance, even if it is a solution of \cref{th:SCM}, as this will reduce the impact of the attack. This can be contributed to the fact that attacks in random directions might cancel out over multiple learning rounds, as demonstrated in \cref{fig:sim_ones}. Therefore it is important to introduce a temporal component into the \gls{scm} attack, which ensures an accumulation of the attack over multiple time steps. In \cref{fig:multi_update}, two consecutive learning rounds are shown, where the red dashed lines represent the \gls{sc}. As the \glspl{sc} in this figure are not aligned, a slight cancellation of the effect of the \glspl{sc} can be observed. To avoid this cancellation and to maximize the effect of the attack over multiple time steps, we introduce:
\begin{theorem}[Aligned Sensitivity Curve Maximization]
	For a given previous $\mathbf{sc}(\mathcal{Y}_{i-1}, \mathcal{Z}^{\star}_{i-1})$ from time $i-1$ and a current $\mathbf{sc}(\mathcal{Y}_{i}, \mathcal{Z}_{i})$ from time $i$, the optimal outlier sample $\mathcal{Z}_{i}^{\star}$ which maximizes an upper bound of the malicious divergence over two consecutive time steps, can be found by solving
	\begin{align}
		\mathcal{Z}_{i}^{\star} = &\argmax_{\mathcal{Z}_{i}} \bigl\{  \left\lVert \mathbf{sc}(\mathcal{Y}_{i}, \mathcal{Z}_{i}) \right\rVert^{2}  + 2 \left\lVert \mathbf{sc}(\mathcal{Y}_{i-1}, \mathcal{Z}^{\star}_{i-1}) \right\rVert \nonumber\\ 
		& \cdot \left\lVert \mathbf{sc}(\mathcal{Y}_{i}, \mathcal{Z}_{i}) \right\rVert \cos\left( \mathbf{sc}(\mathcal{Y}_{i-1}, \mathcal{Z}^{\star}_{i-1}), \mathbf{sc}(\mathcal{Y}_{i}, \mathcal{Z}_{i}) \right) \bigr\}
	\end{align}
	with an initial optimal outlier sample
	\begin{equation}
		\mathcal{Z}_{0}^{\star} = \argmax_{\mathcal{Z}_{0}} \left\lVert\mathbf{sc}(\mathcal{Y}_{0}, \mathcal{Z}_{0})\right\rVert^{2}.
	\end{equation}
	\label{th:ascm}
\end{theorem}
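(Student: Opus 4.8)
The plan is to recognize the displayed objective as the part of $\bigl\lVert \mathbf{sc}(\mathcal{Y}_{i-1},\mathcal{Z}^{\star}_{i-1}) + \mathbf{sc}(\mathcal{Y}_{i},\mathcal{Z}_{i}) \bigr\rVert^{2}$ that depends on $\mathcal{Z}_i$, where this squared norm is (a constant multiple of) the upper bound on the two-round malicious divergence referred to in the statement. First I would pin down the ``malicious divergence over two consecutive time steps''. By \cref{th:SCM} together with \eqref{eqn:sc}, injecting $\mathcal{Z}_j$ at round $j$ displaces the aggregate output by $\mathbf{agg}(\mathcal{Y}_j\cup\mathcal{Z}_j)-\mathbf{agg}(\mathcal{Y}_j)=\tfrac1N\mathbf{sc}(\mathcal{Y}_j,\mathcal{Z}_j)$. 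Propagating the single-round displacements at rounds $i-1$ and $i$ through the \gls{atc} recursion \eqref{eqn:adapt}--\eqref{eqn:ATC}, and bounding the intervening local gradient step by the triangle inequality and a local Lipschitz constant (absorbed, together with $\tfrac1N$ and $\mu$, into one constant), the attacked-versus-benign weight deviation accumulated over the two rounds is bounded by a constant multiple of $\bigl\lVert \mathbf{sc}(\mathcal{Y}_{i-1},\mathcal{Z}^{\star}_{i-1}) + \mathbf{sc}(\mathcal{Y}_{i},\mathcal{Z}_{i}) \bigr\rVert$; its square is the surrogate to be maximized, with $\mathcal{Z}^{\star}_{i-1}$ (hence $\mathbf{sc}(\mathcal{Y}_{i-1},\mathcal{Z}^{\star}_{i-1})$) already committed from the previous round.

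Then, writing $\ba:=\mathbf{sc}(\mathcal{Y}_{i-1},\mathcal{Z}^{\star}_{i-1})$ and $\bb:=\mathbf{sc}(\mathcal{Y}_{i},\mathcal{Z}_{i})$, I would apply $\lVert\ba+\bb\rVert^{2}=\lVert\ba\rVert^{2}+\lVert\bb\rVert^{2}+2\ba^{\top}\bb$ and rewrite the cross term as $2\lVert\ba\rVert\,\lVert\bb\rVert\cos(\ba,\bb)$ via the definition of $\cos(\ba,\bb)$ from the notation section. Since $\lVert\ba\rVert^{2}=\lVert\mathbf{sc}(\mathcal{Y}_{i-1},\mathcal{Z}^{\star}_{i-1})\rVert^{2}$ is independent of the optimization variable $\mathcal{Z}_i$, it may be dropped from the $\argmax$ without changing the maximizer, and what remains is exactly the objective of \cref{th:ascm}. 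For the base case $i=0$ there is no committed previous attack (equivalently $\ba=\bzero$), so the surrogate collapses to $\lVert\mathbf{sc}(\mathcal{Y}_0,\mathcal{Z}_0)\rVert^{2}$ and $\mathcal{Z}_0^\star$ is supplied by \cref{th:SCM}.

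I expect the only genuine obstacle to be the first step: making rigorous the claim that $\bigl\lVert \mathbf{sc}(\mathcal{Y}_{i-1},\mathcal{Z}^{\star}_{i-1}) + \mathbf{sc}(\mathcal{Y}_{i},\mathcal{Z}_{i}) \bigr\rVert^{2}$ upper bounds (up to constants) the true two-round divergence of the \gls{atc}-diffusion dynamics, since this requires tracking how the two per-round injections pass through the local gradient update and the combination matrix $\bA$ and collecting all step-size-, Lipschitz-, and topology-dependent factors into a single constant; if instead the paper simply \emph{defines} the two-round divergence as this squared sum, the statement becomes purely algebraic. Everything after the reduction --- the polarization expansion, discarding the $\mathcal{Z}_i$-independent term, and the cosine rewriting --- is elementary.
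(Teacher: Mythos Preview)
Your second half---the polarization expansion $\lVert\ba+\bb\rVert^{2}=\lVert\ba\rVert^{2}+\lVert\bb\rVert^{2}+2\lVert\ba\rVert\lVert\bb\rVert\cos(\ba,\bb)$, dropping the $\mathcal{Z}_i$-independent $\lVert\ba\rVert^{2}$, and the base case---matches the paper exactly. The first half differs in two respects.

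First, the paper does not bound the gradient step via Lipschitz constants or the combination matrix. It simply lets $\mu\to 0$, so that $-\mu\widehat{\nabla J}_k(\bw_{k,i-1})$ vanishes and $\bphi_{k,i}\approx\bw_{k,i-1}$. This is cruder than what you propose but avoids any regularity assumption; it also means the result is really a small-step-size heuristic rather than a bound with explicit constants.

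Second, and more substantively, the ``malicious divergence'' the paper maximizes is the squared length of the blue segment in \cref{fig:multi_update}, namely $\lVert\bw_{k,i}-\bar{\bw}_{k,i-1}\rVert^{2}$. After the $\mu\to0$ simplification this is \emph{not} just $\lVert\mathbf{sc}_{i-1}+\mathbf{sc}_{i}\rVert^{2}$: there is an additional drift term $\mathbf{agg}(\mathcal{Y}_{i})-\mathbf{agg}(\mathcal{Y}_{i-1}\cup\mathcal{Z}^{\star}_{i-1})$ inside the norm. The paper separates it off with the elementary inequality $\lVert\bu+\bv\rVert^{2}\le 2\lVert\bu\rVert^{2}+2\lVert\bv\rVert^{2}$, and \emph{this} split is precisely where the phrase ``upper bound'' in the theorem statement comes from---the drift term is constant in $\mathcal{Z}_i$ and is then discarded along with $\lVert\mathbf{sc}_{i-1}\rVert^{2}$. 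Your proposal attributes the upper bound to the gradient step instead and never sees this extra term, so as written it does not reproduce the paper's argument; adding the drift term and the $2$-$2$ split would close the gap.
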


\begin{proof}
	The distance to be maximized is denoted with the blue dash-dotted line in \cref{fig:multi_update}, i.e. 
	\begin{align}
		& \lVert (\bw_{k,i-1} -\bar{\bw}_{k,i-1}) -\mu \widehat{\nabla J}_{k}(\bw_{k,i-1}) \nonumber \\
		&+ \bar{\bw}_{k,i} - \bphi_{k,i} + (\bw_{k,i} -\bar{\bw}_{k,i}) \rVert^{2}
	\end{align}
	with $\bar{\bw}_{k,i} = \mathbf{agg}\left(\{\bphi_{\ell,i}\}_{\ell \in \mathcal{H}_{k}}\right)$ and letting $\mu \rightarrow 0$, we can neglect the effect of the gradient update, approximate $\bphi_{k,i} \approx \bw_{k,i-1}$ and we simplify to
	\begin{align}
		\lVert (\bw_{k,i-1} -\bar{\bw}_{k,i-1}) + \bar{\bw}_{k,i} - \bw_{k,i-1} + (\bw_{k,i} -\bar{\bw}_{k,i}) \rVert^{2}.
	\end{align}
	Using the previously introduced notation and dropping the node index $k$, the maximization problem becomes
	\begin{align}
	 \max_{\mathcal{Z}_{i}}& \bigl\{\lVert \mathbf{sc}(\mathcal{Y}_{i-1}, \mathcal{Z}^{\star}_{i-1})  + \mathbf{sc}(\mathcal{Y}_{i}, \mathcal{Z}_{i}) \nonumber \\
	 & + \mathbf{agg}(\mathcal{Y}_{i}) - \mathbf{agg}(\mathcal{Y}_{i-1} \cup \mathcal{Z}^{\star}_{i-1})\rVert^{2}\bigr\} \nonumber\\
	\leq\max_{\mathcal{Z}_{i}}& \bigl\{2 \lVert \mathbf{sc}(\mathcal{Y}_{i-1}, \mathcal{Z}^{\star}_{i-1}) + \mathbf{sc}(\mathcal{Y}_{i}, \mathcal{Z}_{i}) \rVert^{2}  \nonumber\\
	 & + 2 \lVert \mathbf{agg}(\mathcal{Y}_{i}) - \mathbf{agg}(\mathcal{Y}_{i-1} \cup \mathcal{Z}^{\star}_{i-1})  \rVert^{2}\bigr\} \nonumber\\
	=\max_{\mathcal{Z}_{i}}& \bigl\{2 \lVert \mathbf{sc}(\mathcal{Y}_{i-1}, \mathcal{Z}^{\star}_{i-1}) \rVert^{2} + 2 \lVert \mathbf{sc}(\mathcal{Y}_{i}, \mathcal{Z}_{i})  \rVert^{2} \nonumber\\
	 & + 2 \lVert \mathbf{agg}(\mathcal{Y}_{i}) - \mathbf{agg}(\mathcal{Y}_{i-1} \cup \mathcal{Z}^{\star}_{i-1})  \rVert^{2} \nonumber\\
	 & + 4 \lVert \mathbf{sc}(\mathcal{Y}_{i-1}, \mathcal{Z}^{\star}_{i-1}) \rVert \cdot \lVert \mathbf{sc}(\mathcal{Y}_{i}, \mathcal{Z}_{i}) \rVert \nonumber\\
	 & \cdot \cos \left(\mathbf{sc}(\mathcal{Y}_{i-1}, \mathcal{Z}^{\star}_{i-1}), \mathbf{sc}(\mathcal{Y}_{i}, \mathcal{Z}_{i})\right) \bigr\}
	\end{align}
	removing terms which are constant with respect to the maximization and solving for the argument leads to
	\begin{align}
		\mathcal{Z}^{\star}_{i} =& \argmax_{\mathcal{Z}_{i}} \bigl\{ \lVert \mathbf{sc}(\mathcal{Y}_{i}, \mathcal{Z}_{i}) \rVert^{2} + 2 \lVert \mathbf{sc}(\mathcal{Y}_{i-1}, \mathcal{Z}^{\star}_{i-1}) \rVert \nonumber \\
		& \cdot \lVert \mathbf{sc}(\mathcal{Y}_{i}, \mathcal{Z}_{i}) \rVert \cos \left(\mathbf{sc}(\mathcal{Y}_{i-1}, \mathcal{Z}^{\star}_{i-1}), \mathbf{sc}(\mathcal{Y}_{i}, \mathcal{Z}_{i})\right) \bigr\}
	\end{align}
	which proves \cref{th:ascm}.
\end{proof}

\begin{figure}
	\centering
	\hfil
	\resizebox{1\columnwidth}{!}{
		\begin{tikzpicture}[scale = 2]
			
			\coordinate (w_old) at (0.5, 0.5);
			\coordinate (phi_k) at (2, 2.5);
			\coordinate (w_benign) at (1, 2);
			\coordinate (w) at (3, 1);
			\coordinate (phi_k_i_1) at (4.5, 2);
			\coordinate (w_benign_i_1) at (5.7, 3);
			\coordinate (w_i_1) at (4.2, 2.9);
			
			\draw[->] (w_old) node[below] {$\bw_{k,i-2}$} -- node[left] {$-\mu \widehat{\nabla J}_{k}(\bw_{k,i-2})$} (phi_k) node[above] {$\bphi_{k,i-1}$};
			\draw[->] (phi_k) -- node[right] {actual update} (w) ;
			\draw[->] (phi_k) --  node[left] {honest update} (w_benign) node[below] {$\bar{\bw}_{k,i-1}$};
			\draw[dashed,red, <-, thick] (w) -- (w_benign);
			
			\draw[->] (w) node[below] {$\bw_{k,i-1}$} -- node[right] {$-\mu \widehat{\nabla J}_{k}(\bw_{k,i-1})$} (phi_k_i_1) node[right] {$\bphi_{k,i}$};
			\draw[->] (phi_k_i_1) -- node[left] {actual update} (w_i_1) node[left] {$\bw_{k,i}$};;
			\draw[->] (phi_k_i_1) --  node[right] {honest update} (w_benign_i_1) node[right] {$\bar{\bw}_{k,i}$};
			
			\draw[dashed,red, <-, thick] (w_i_1) -- (w_benign_i_1);
			\draw[dash dot,blue, thick, ->] (w_benign) -- (w_i_1);
		\end{tikzpicture}
	}
	\hfil
	\caption{Multiple rounds of learning at agent $k$ without aligned \gls{scm}. The red dashed line indicates the \gls{sc} and $\bar{\bw}_{k,i} = \mathbf{agg}\left(\{\bphi_{\ell,i}\}_{\ell \in \mathcal{H}_{k}}\right)$ indicates the honest aggregation result.}
	\label{fig:multi_update}
\end{figure}
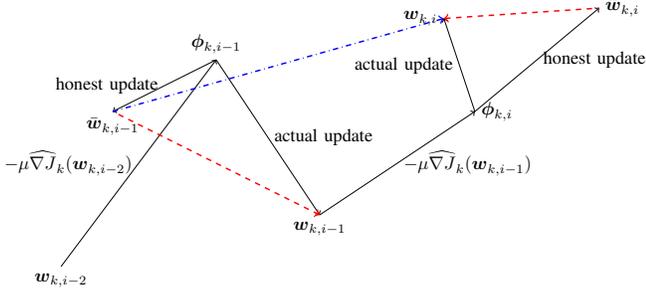

With the definition of the \gls{ascm} from \cref{th:SCM}, we are now able to determine an optimal attack sample $\mathcal{Z}^{\star}_{i}$ at time $i$, which takes into account the optimal attack sample from the previous time $\mathcal{Z}^{\star}_{i-1}$ and  maximizes the effect of the attack over both time steps. In \cref{th:ascm}, we maximize the norm of the current \gls{sc} with the cosine similarity between the current and the previous \gls{sc}. This motivates the following definition of a simplified attack, which we show later to be effective in practice:
\begin{definition}[Simplified Aligned Sensitivity Curve Maximization]
	The optimal outlier sample $\mathcal{Z}_{i}^{\star}$ which maximizes the current $\mathbf{sc}(\mathcal{Y}_{i}, \mathcal{Z}_{i})$, while keeping it aligned with the previous $\mathbf{sc}(\mathcal{Y}_{i-1}, \mathcal{Z}^{\star}_{i-1})$, can be found by solving	
	\begin{align}
		\mathcal{Z}_{i}^{\star} = & \argmax_{\mathcal{Z}_{i}} \left\lVert\mathbf{sc}(\mathcal{Y}_{i}, \mathcal{Z}_{i})\right\rVert^{2} \nonumber \\
		& \text{subject to} \nonumber\\
		&\cos\left( \mathbf{sc}(\mathcal{Y}_{i}, \mathcal{Z}_{i}), \mathbf{sc}(\mathcal{Y}_{i-1}, \mathcal{Z}^{\star}_{i-1})\right) \geq \delta,
	\end{align}
	where $\delta$ is a constant, which should be chosen close to $1$.
	\label{def:sascm}
\end{definition}
This simplified attack allows us to independently maximize the strength of the attack while keeping the attack aligned with the previous attack.

While \cref{th:SCM} provides a general rule to find an attack value which has the maximum effect on the aggregation at a certain individual time step, \cref{th:ascm} enables us to ensure a maximum accumulation of the attacks over time. Finally, \cref{def:sascm} provides a good trade-off between ease of use and a powerful attack design.

\section{Aggregation Schemes}
\label{sec:aggregators}

This section analyzes and gives an overview of aggregation schemes which can be used in place of the general aggregation function $\mathbf{agg}(\cdot)$ in Equation~\eqref{eqn:robAGG}. To objectively measure the influence of an outlier on the aggregation result, we will use the \gls{sc}. The classical 1D-\glspl{sc} are presented in \cref{fig:SC,fig:SC2}. Properties which are favorable to observe in these figures, are a finite \gls{ges} and a finite rejection point, combined with a linear local shift sensitivity, as explained in \cref{sec:sc}. These three properties are well-known to be critical from the perspective of classical robust statistics \cite{Hampel.1986}. Therefore it is not surprising that these properties emerge naturally in modern robust aggregators for distributed learning, like in \gls{ios} or Multi-Krum.

As most robust aggregators are multivariate aggregators, it is not sufficient to analyze their 1D-\glspl{sc}. Therefore the 2D-\glspl{sc}, based on Equation~\eqref{eqn:sc}, are also analyzed and depicted in \cref{fig:2d-sc-1,fig:2d-sc-3,fig:2d-sc-2}.

\begin{figure}
	\centering
	\hfil
	\subcaptionbox{\glspl{sc} of sample mean, $\alpha$-trimmed mean, median and \gls{scc} \label{fig:sc1}}[.49\columnwidth]{\resizebox{0.49\columnwidth}{!}{\input{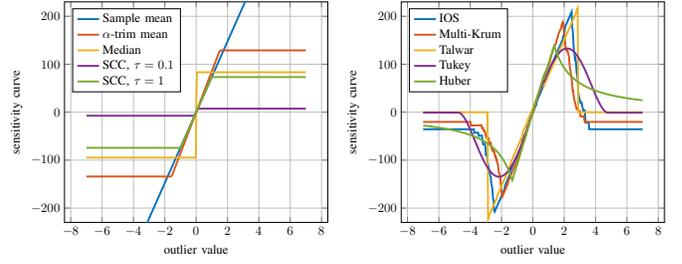}}}
	\hfil
	\subcaptionbox{\glspl{sc} of \gls{ios}, Multi-Krum, Talwar, Tukey and Huber
		\label{fig:sc2}}[.49\columnwidth]{\resizebox{0.49\columnwidth}{!}{\input{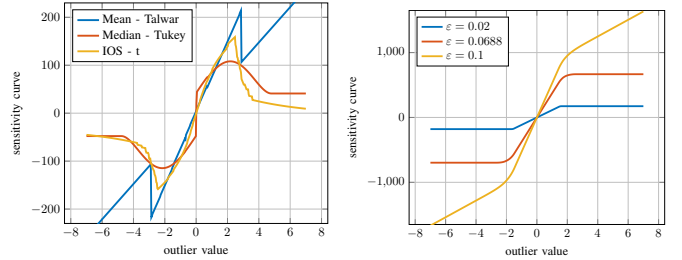}}}
	\hfil
	\caption{Overview of \glspl{sc} for different aggregation schemes for $r=1$.}
	\label{fig:SC}
\end{figure}

\begin{figure}
	\centering
	\hfil
	\subcaptionbox{\glspl{sc} of exemplary MixTailor aggregators. \label{fig:sc-mix}}[.49\columnwidth]{\resizebox{0.49\columnwidth}{!}{\input{figs/SC-mixtailor.tex}}}
	\hfil
	\subcaptionbox{\glspl{sc} of $\alpha$-trimmed mean with different contamination rates and $\alpha=0.0688$.
		\label{fig:sc-trimm}}[.49\columnwidth]{\resizebox{0.49\columnwidth}{!}{\input{figs/SC-trimm.tex}}}
	\hfil
	\caption{Overview of \glspl{sc} for different aggregation schemes for $r=1$.}
	\label{fig:SC2}
\end{figure}

\begin{figure}
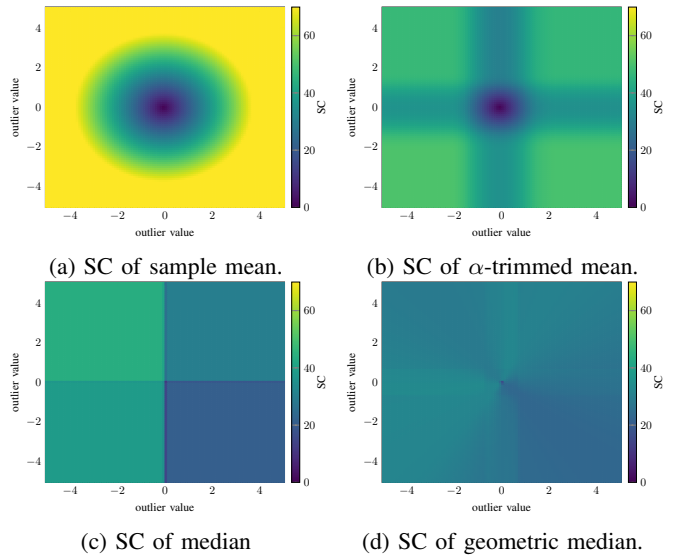

	\centering
	\hfil
	\subcaptionbox{\gls{sc} of sample mean.
		\label{fig:2d-sc-mean}}[.49\columnwidth]{\resizebox{0.49\columnwidth}{!}{\input{figs/2D-SC-Mean.tex}}}
	\hfil
	\subcaptionbox{\gls{sc} of $\alpha$-trimmed mean.
		\label{fig:2d-sc-trimm}}[.49\columnwidth]{\resizebox{0.49\columnwidth}{!}{\input{figs/2D-SC-TriMean.tex}}}
	\hfil \\
	\hfil
	\subcaptionbox{\gls{sc} of median
		\label{fig:2d-sc-median}}[.49\columnwidth]{\resizebox{0.49\columnwidth}{!}{\input{figs/2D-SC-Median.tex}}}
	\hfil
	\subcaptionbox{\gls{sc} of geometric median.
		\label{fig:2d-sc-geomedian}}[.49\columnwidth]{\resizebox{0.49\columnwidth}{!}{\input{figs/2D-SC-GeoMedian.tex}}}
	\hfil 	
	\caption{Euclidean norm of 2D-\glspl{sc} for different aggregation schemes. Values larger than 70 are clipped. Mean of underlying data is at $(0,0)$.}
	\label{fig:2d-sc-1}
\end{figure}

\emph{Sample Mean:} Classical aggregation scheme \cite{McMahan.2017, Tang.2018}, obtained using the uniform averaging rule from Equation~\eqref{eqn:uni_avg} in Equation~\eqref{eqn:ATC} as
\begin{equation}
	\bw_{k,i} = \frac{1}{|\mathcal{N}_{k}|}\sum_{\ell \in \mathcal{N}_{k}} \bphi_{\ell,i}.
\end{equation}
The sample Mean is non-robust with a breakdown point of $0$, hence, a single malicious sample can cause an arbitrary aggregation result. This can be seen in \cref{fig:sc1,fig:2d-sc-mean} with an infinite \gls{ges} and rejection point. It can also be seen as an M-estimator with $\psi(t) = \frac{1}{2}$ in Equation~\eqref{eqn:mest}.

\emph{Coordinate-wise $\alpha$-Trimmed-Mean:} Removes the $\alpha \cdot |\mathcal{N}_{k}|$ smallest and largest values in each dimension and takes the average of the remaining values \cite{Yin.2018, Yang.2019}. Therefore, it has a breakdown point of $\alpha$. The 1D and 2D-\glspl{sc} are shown in \cref{fig:sc1,fig:2d-sc-trimm}. \cref{fig:sc-trimm} shows the \glspl{sc} of the $\alpha$-trimmed mean for varying rates of contamination. As long as the amount of outliers is below the trimmed proportion, the aggregator has a bounded \gls{ges}, but when the amount of outliers exceeds the trimmed proportion, we can observe a linear increase in the \gls{sc}, similar to the non-robust Sample Mean.

\emph{Coordinate-wise Median:} Calculates the value for which the same amount of values are smaller and larger for each dimension \cite{Yin.2018}. It has a breakdown point of $0.5$. The 1D and 2D-\glspl{sc} are shown in \cref{fig:sc1,fig:2d-sc-median}, where we can observe a finite \gls{ges}, but also a jump in the local-shift sensitivity, which explains the lower efficiency of the median.

\emph{(Coordinate-wise) M-Estimators:} For a multivariate estimate of location, M-estimators solve \cite{Schroth.2021, Maronna.2019} 
\begin{align}
	\sum_{\ell \in \mathcal{N}_{k}} 2 \psi(t_{\ell k, i}) (\bphi_{\ell,i} - \bw_{k,i} ) = 0
	\label{eqn:mest}
\end{align} 
with the weight function $\psi(\cdot)$ and the squared Mahalanobis distance
\begin{align}
	t_{\ell k, i} = \left(\bphi_{\ell,i} - \bw_{k,i}\right)^{\top} \bS_{k,i}^{-1} \left(\bphi_{\ell,i} - \bw_{k,i}\right)
\end{align} 
where $\bS_{k,i}$ is a robust scatter matrix estimate. An M-estimator is called monotone when $\sqrt{t} \cdot \psi(t)$ is bounded and $t \cdot \psi(t)$ is non-decreasing, and it is called redescending if $t \cdot \psi(t)$ is redescending. \cref{tb:psi} presents some widespread weight functions, more weight functions can be found in \cite{Schroth.2021, DeMenezes.2021}. The above multivariate M-estimator can also be applied coordinate-wise, i.e. in \cite{Schroth.2023a, Vlaski.2022} coordinate-wise M-estimators were used for robust and efficient aggregation. 

1D and 2D-\glspl{sc} are shown in \cref{fig:sc2,fig:2d-sc-3}. E.g. for Tukey and Talwar, we can observe a finite \gls{ges} and rejection point, with a linear local shift sensitivity, for Huber a finite \gls{ges} and also a linear local shift sensitivity.

\begin{table}[t]
	\centering
	\begin{tblr}{colspec={cccc}, rowsep=5pt, 
		}
		\toprule
		monotone & t & Huber \\
		\midrule
		$\psi(t)$
		& $\frac{\nu + r}{2(\nu + t)}$ 
		& $	\begin{cases}
			\frac{1}{2b} &\\
			\frac{c^{2}}{2bt} &
		\end{cases}$\\
		\midrule
		redescending & Tukey & Talwar   \\
		\midrule
		$\psi(t)$
		& $\begin{cases}
			\frac{t^{2}}{2c^{4}} - \frac{t}{c^{2}} + \frac{1}{2}, & t \leq c^{2}\\
			0, & t > c^{2}
		\end{cases}$ 
		& $\begin{cases}
			\frac{1}{2}, & t \leq c^{2}\\
			0, & t > c^{2}
		\end{cases}$ \\
		\bottomrule
	\end{tblr}
	\caption{$\psi(t)$ functions for M-estimators. For details on the variable $b$ in Huber, see \cite{Schroth.2021}.}
	\label{tb:psi}
\end{table}

\begin{figure}
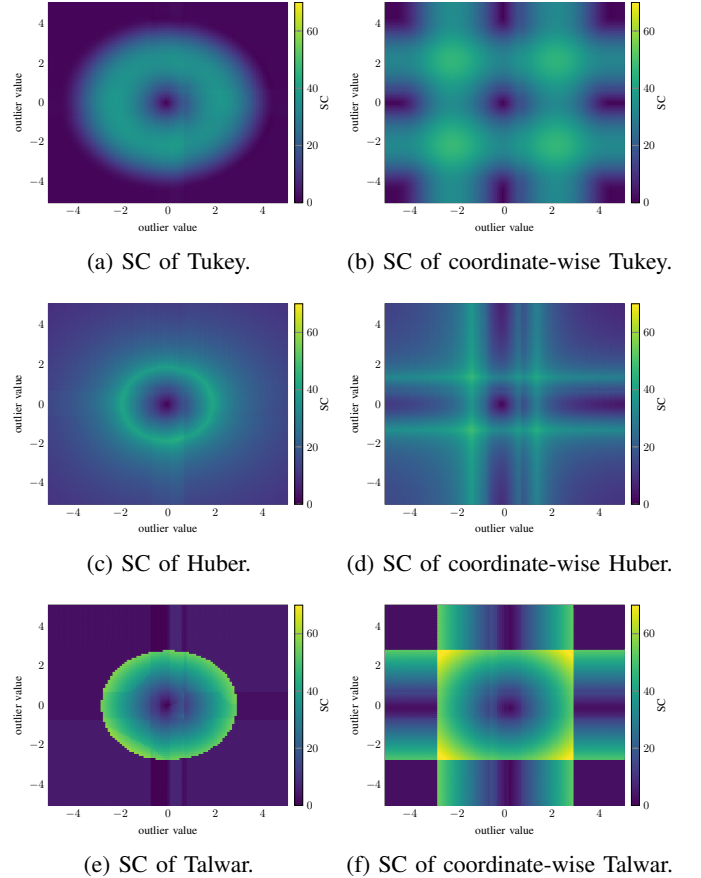

	\centering
	\hfil
	\subcaptionbox{\gls{sc} of Tukey.
		\label{fig:2d-sc-tukey}}[.49\columnwidth]{\resizebox{0.49\columnwidth}{!}{\input{figs/2D-SC-Tukey.tex}}}
	\hfil
	\subcaptionbox{\gls{sc} of coordinate-wise Tukey.
		\label{fig:2d-sc-tukey-coord}}[.49\columnwidth]{\resizebox{0.49\columnwidth}{!}{\input{figs/2D-SC-Tukey-coord.tex}}}
	\hfil \\
	\hfil
	\subcaptionbox{\gls{sc} of Huber.
		\label{fig:2d-sc-huber}}[.49\columnwidth]{\resizebox{0.49\columnwidth}{!}{\input{figs/2D-SC-Huber.tex}}}
	\hfil
	\subcaptionbox{\gls{sc} of coordinate-wise Huber.
		\label{fig:2d-sc-huber-coord}}[.49\columnwidth]{\resizebox{0.49\columnwidth}{!}{\input{figs/2D-SC-Huber-coord.tex}}}
	\hfil \\
	\hfil
	\subcaptionbox{\gls{sc} of Talwar.
		\label{fig:2d-sc-talwar}}[.49\columnwidth]{\resizebox{0.49\columnwidth}{!}{\input{figs/2D-SC-Talwar.tex}}}
	\hfil
	\subcaptionbox{\gls{sc} of coordinate-wise Talwar.
		\label{fig:2d-sc-talwar-coord}}[.49\columnwidth]{\resizebox{0.49\columnwidth}{!}{\input{figs/2D-SC-Talwar-coord.tex}}}
	\hfil
	\caption{Euclidean norm of 2D-\glspl{sc} for M-estimation based aggregation schemes. Values larger than 70 are clipped. Mean of underlying data is at $(0,0)$.}
	\label{fig:2d-sc-3}
\end{figure}

\emph{Geometric Median / RFA:} The geometric median \cite{Weiszfeld.2009, Beck.2015}, also called \gls{rfa} by \cite{Pillutla.2022}, can be seen as a special case of a \gls{mpe} M-estimator with $\psi(t) = \frac{1}{2} \beta t^{\beta-1}$, $\beta=\frac{1}{2}$ and $\bS_{k,i} = \bI$, hence, it solves
\begin{align}
	\sum_{\ell \in \mathcal{N}_{k}} \frac{\bphi_{\ell,i} - \bw_{k,i}}{\lVert\bphi_{\ell,i} - \bw_{k,i}\rVert} = 0.
\end{align} 
The breakdown point is $0.5$, which is similar to the breakdown point of the coordinate-wise median \cite{Lopuhaa.1991}. The \gls{sc} is depicted in \cref{fig:2d-sc-geomedian}, which is almost completely flat.

\emph{SCC:} \glsreset{scc}\gls{scc}, also called ClippedGossip \cite{He.2023}, is based on the idea that each honest agent can trust his own local weight vector $\bphi_{k,i}$. Each weight vector which is further away than a threshold value $\tau_{k,i}$, is then clipped to $\tau_{k,i}$. The aggregation step becomes
\begin{align}
	\bw_{k,i} = \sum_{\ell \in \mathcal{N}_{k}} a_{\ell k} \left(\bphi_{k,i} + \text{CLIP}(\bphi_{\ell,i} - \bphi_{k,i}, \tau_{k,i})\right)
	\label{eqn:scc_w}
\end{align} 
with the clipping function
\begin{align}
	\text{CLIP}(\bx, \tau) = \min(1, \tau / \lVert\bx\rVert) \cdot \bx.
\end{align} 
The threshold can be a fixed value for each time step $\tau_{k,i} = \tau_{k}$ or it can be chosen at each time step. The authors of \cite{He.2023} claim that an adaptive $\tau_{k,i}$ shows a better performance in general, but in \cite{Raynal.2023} it is mentioned that an adaptive $\tau_{k,i}$ can be manipulated by byzantine agents to increase the attack surface. 
Two exemplary 1D-\glspl{sc} are shown in \cref{fig:SC} and two exemplary 2D-\glspl{sc} are shown in \cref{fig:2d-sc-scc1,fig:2d-sc-scc2}. In the 1D-case, the \gls{sc} is very similar to the $\alpha$-trimmed mean with a finite \gls{ges} and a linear local-shift-sensitivity. For the 2D-case, the \gls{sc} is not symmetric, because the clipping is based on a trusted center which might not align with the true underlying mean. Therefore, the influence of an outlier depends on its position with respect to the trusted center and the true mean.

\emph{Krum / Multi-Krum:} Krum calculates the weight vector which has the smallest distance to the $|\mathcal{N}_{k}| - |\mathcal{B}_{k}| - 2$ closest vectors \cite{Blanchard.2017}, given by
\begin{equation}
	\bw_{k,i} = \argmin_{\bphi_{\ell,i}, \ell \in \mathcal{N}_{k}}  \min_{\substack{\mathcal{S}_{k} \subset \mathcal{N}_{k}, \\ |\mathcal{S}_{k}| = |\mathcal{N}_{k}| - |\mathcal{B}_{k}| - 2}} \sum_{s \in \mathcal{S}_{k}} \lVert \bphi_{\ell,i} - \bphi_{s,i}\rVert^{2}.
\end{equation}
To create the set $\mathcal{S}_{k}$, Krum is required to have some knowledge over the number of byzantine neighbors $|\mathcal{B}_{k}|$. Multi-Krum takes the average of the $m$ vectors with the smallest distance. For $m=1$, Krum is obtained, in \cite{Blanchard.2017} $m = |\mathcal{N}_{k}| - |\mathcal{B}_{k}|$ is suggested, this should lead to the highest efficiency, at the cost of a reduced robustness, as the average of all remaining weight vectors is taken.

The \glspl{sc} of Multi-Krum are very similar to \gls{ios} and Talwar as shown in \cref{fig:sc2,fig:2d-sc-krum}. As Multi-Krum calculates the aggregation result by including or discarding individual samples from the averaging step, the \gls{sc} exhibits a stepped appearance based on the underlying data samples.

\emph{IOS / FABA:} IOS \cite{Wu.2023} iteratively discards a total of $|\mathcal{B}_{k}|$ weight vectors, hence, requires the knowledge of $|\mathcal{B}_{k}|$. First it defines a trusted set $\mathcal{T}_{k} = \mathcal{N}_{k}$, then calculates the weighted average of all weight vectors in the trusted set
\begin{equation}
	\bphi_{k,i}^{\text{avg}} =  \frac{\sum_{\ell \in \mathcal{T}_{k}} a_{\ell k} \bphi_{\ell,i}}{\sum_{\ell \in \mathcal{T}_{k}} a_{\ell k}},
\end{equation}
finds the weight vector which has the largest distance to the average (excluding its own value)
\begin{equation}
	j =  \argmax_{\ell \in \mathcal{T}_{k} \backslash \{k\}} \lVert \bphi_{\ell,i} - \bphi_{k,i}^{\text{avg}} \rVert
\end{equation}
and removes this weight vector from the trusted set
\begin{equation}
	\mathcal{T}_{k} = \mathcal{T}_{k} \backslash \{j\}.
\end{equation}
These steps are repeated until $|\mathcal{B}_{k}|$ vectors are removed. Finally, the weighted average of the remaining weight vectors in the trusted set is calculated 
\begin{equation}
	\bw_{k,i} =  \frac{\sum_{\ell \in \mathcal{T}_{k}} a_{\ell k} \bphi_{\ell,i}}{\sum_{\ell \in \mathcal{T}_{k}} a_{\ell k}}.
\end{equation}
\gls{ios} is based on a doubly-stochastic weight matrix $\bA$, e.g. Metropolis rule from Equation~\eqref{eqn:metropolis}, which can be difficult to be calculated reliably in an adversarial network, as it requires the collaboration of the neighboring agents. A simplified version of \gls{ios} is called \gls{faba} \cite{Xia.2019}, which calculates the average with the uniform averaging rule from Equation~\eqref{eqn:uni_avg} and does not require the collaboration of the neighboring agents. The \glspl{sc} are shown in \cref{fig:sc2,fig:2d-sc-ios}, these \glspl{sc} are very similar to Multi-Krum, because both aggregators are based on the same idea of discarding samples which are the furthest away/apart.

\emph{MixTailor:} MixTailor randomly selects in each aggregation round an aggregation scheme from a predefined set of aggregation schemes \cite{Ramezani-Kebrya.2022}. The authors claim that this will make it more difficult for an attacker to develop a tailored attack for a specific aggregation scheme, hence, increasing the robustness against well crafted attacks. \cite{Nabavirazavi.2024} presents a similar idea, but only randomly switches between two different aggregation schemes. In \cref{fig:sc-mix} some exemplary MixTailor variants are presented. In theory, it sounds promising to randomly select an aggregation scheme per aggregation round, but from the \gls{sc} it becomes clear, that MixTailor is only as good as its worst aggregator. This is the case as the \gls{sc} of MixTailor is the average of all its \glspl{sc} and therefore a robust MixTailor is only achieved by selecting aggregators of similar robustness. For example, a non-robust MixTailor is obtained when selecting Talwar and Mean, whereas a robust MixTailor can be obtained for Median and Tukey, as shown in \cref{fig:sc-mix}.

\begin{figure}
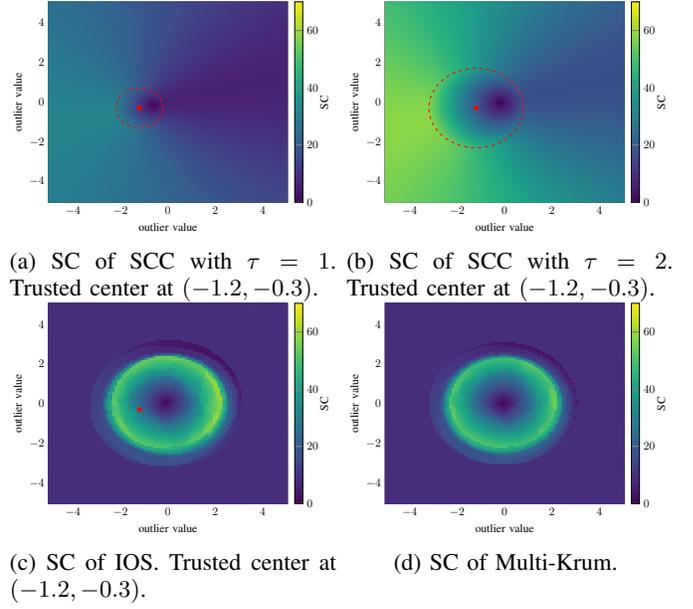

	\centering
	\hfil
	\subcaptionbox{\gls{sc} of \gls{scc} with $\tau = 1$. Trusted center at $(-1.2, -0.3)$.
		\label{fig:2d-sc-scc1}}[.49\columnwidth]{\resizebox{0.49\columnwidth}{!}{\input{figs/2D-SC-SCC-1.tex}}}
	\hfil
	\subcaptionbox{\gls{sc} of \gls{scc} with $\tau = 2$. Trusted center at $(-1.2, -0.3)$.
		\label{fig:2d-sc-scc2}}[.49\columnwidth]{\resizebox{0.49\columnwidth}{!}{\input{figs/2D-SC-SCC-2.tex}}}
	\hfil \\
	\hfil
	\subcaptionbox{\gls{sc} of \gls{ios}. Trusted center at $(-1.2, -0.3)$.
		\label{fig:2d-sc-ios}}[.49\columnwidth]{\resizebox{0.49\columnwidth}{!}{\input{figs/2D-SC-IOS.tex}}}
	\hfil
	\subcaptionbox{\gls{sc} of Multi-Krum.
		\label{fig:2d-sc-krum}}[.49\columnwidth]{\resizebox{0.49\columnwidth}{!}{\input{figs/2D-SC-MultiKrum.tex}}}
	\hfil
	\caption{Euclidean norm of 2D-\glspl{sc} for different aggregation schemes. Values larger than 70 are clipped. Mean of underlying data is at $(0,0)$.}
	\label{fig:2d-sc-2}
\end{figure}

\section{Attack Schemes}
\label{sec:attacks}

In this section, we will give an overview of existing attack schemes. In the \emph{Gaussian attack}, the attacker draws Gaussian distributed noise with varying variance and either sends this sample to the attacked agent \cite{Li.2023, Ghavamipour.2024}.
In \emph{Sign-flipping}, the sign of the weight vector is flipped \cite{Karimireddy.2021}. Originally, developed for gradient aggregation, this would lead to a maximization of the loss. \emph{Label-flipping} is specifically designed for classification problems, e.g. MNIST dataset or CIFAR-10 dataset, where labels of training data are flipped \cite{Yin.2018, Ghavamipour.2024}. This attack requires full access to the training process, as the underlying training data is manipulated and not only the received weight update. The \emph{MIMIC attack} or \emph{Sample-duplicating attack} copies the weight vector of the attacked agent, which aims to over-emphasize the attacked agent \cite{Karimireddy.2022, Dong.2024}. Lastly, in \gls{ipm}, the inner product between the robust estimation and the true gradient is manipulated in such a way to be negative, which aims to hinder convergence \cite{Xie.2020}. In what follows, we present attacks, which we will use as reference.

Agent $j \in \mathcal{B}$ is the attacking byzantine agent, while agent $k$ is attacked and $j \in \mathcal{B}_{k}$. Each byzantine agent $j$ crafts a specific attack for each honest neighbor $k$, hence, the attack vector from agent $j$ to agent $k$ at time $i$ is denoted by $\bz_{jk,i}$.

\emph{Large Value / Scaling Attack:} In the \gls{lv} or scaling attack, the attacker sends model weights with a large magnitude \cite{Schroth.2023, Ghavamipour.2024, Bagdasaryan.2020}. Some exemplary implementations include a scaled version of the attackers own model
\begin{equation}
	\bz_{jk,i} =  \gamma \cdot \bphi_{j,i},
\end{equation}
a scaled (inverse) version of the honest update of the attacked agent $k$
\begin{equation}
	\bz_{jk,i} = \pm \gamma \cdot \sum_{\ell \in \mathcal{H}_{k}} a_{\ell k} \bphi_{\ell,i},
	\label{eqn:lv_posneg}
\end{equation}
a scaled vector of ones
\begin{equation}
	\bz_{jk,i} =  \gamma \cdot \bone,
	\label{eqn:lv_ones}
\end{equation}
or a scaled randomly generated vector.

\emph{ALIE:} In \gls{alie}, the cumulative standard normal distribution $\phi(z)$ is used to determine the value $z^{\text{max}}$ such that at least $s$ honest agents are further away from the mean value than byzantine agents \cite{Baruch.2019}. First we calculate
\begin{equation}
	s_{jk} = \left\lfloor \frac{|\mathcal{N}_{k}|}{2} + 1\right\rfloor - |\mathcal{B}_{k}|
\end{equation}
and
\begin{equation}
	z^{\text{max}}_{jk} = \max_{z} \left(\phi(z) < \frac{|\mathcal{N}_{k}| - |\mathcal{B}_{k}| - s_{jk}}{|\mathcal{N}_{k}| - |\mathcal{B}_{k}|}\right).
\end{equation}
Subsequently, the coordinate-wise mean and standard deviation over all honest neighbors $\mathcal{H}_{k}$ are calculated and collected in the vectors $\bmu_{k,i}^{\mathcal{H}}$ and $\bsigma_{k,i}^{\mathcal{H}}$, respectively. The final outlier is then obtained by
\begin{equation}
	\bz_{jk,i} = \bmu_{k,i}^{\mathcal{H}} - z^{\text{max}}_{jk} \cdot \bsigma_{k,i}^{\mathcal{H}}.
\end{equation}

\emph{ROP:} In \gls{rop}, it is the goal to find an attack vector which is orthogonal towards the honest update direction and relocated around a reference point \cite{Ozfatura.2024}. In particular, \gls{rop} was developed to circumvent \gls{scc}. Hence, the reference point is assumed to be weight vector $\bphi_{k,i}$ of the attacked agent $k$, but could be adapted to any other reference point. First, the honest update direction is calculated as
\begin{equation}
	\bDelta_{jk,i}^{\mathcal{H}} = \frac{1}{|\mathcal{H}_{k}|}\sum_{\ell \in \mathcal{H}_{k}} \bphi_{\ell,i} - \bphi_{k,i},
	\label{eqn:ropref}
\end{equation}
followed by the orthogonal vector to honest update direction
\begin{equation}
	\bp_{jk,i} = \bone - \frac{\bone^{\top}  \bDelta_{jk,i} ^{\mathcal{H}}}{\lVert \bDelta_{jk,i}^{\mathcal{H}}\rVert^{2}} \cdot \bDelta_{jk,i}^{\mathcal{H}}.
\end{equation}
In Equation~\eqref{eqn:ropref}, we have adapted to $\bphi_{k,i}$ as \gls{scc} clips around this value in Equation~\eqref{eqn:scc_w}. According to \cite{Ozfatura.2024}, the previous weight vector $\bw_{k,i-1}$ would also be a valid choice.

To increase flexibility, the attack vector can be rotated to any angle $\theta$ and scaled with parameter $\gamma$. Finally, the attack vector is relocated around the reference point
\begin{equation}
	\bz_{jk,i} = \gamma \left(\sin(\theta) \frac{\bp_{jk,i}}{\lVert \bp_{jk,i}\rVert} + \cos(\theta) \frac{\bDelta_{jk,i}^{\mathcal{H}}}{\lVert\bDelta_{jk,i}^{\mathcal{H}}\rVert}\right) + \bphi_{k,i}.
\end{equation}
The authors in \cite{Ozfatura.2024} suggest $\theta = \frac{\pi}{2}$ and $\gamma = 1$, when attacking \gls{scc} with $\tau_{k,i} = \{0.1, 1\}$.

\section{SASCM Attack Examples}
\label{sec:example_attacks}

\emph{Coordinate-wise M-estimators:}
Firstly, the estimator specific constant which is based on the tuning parameter of the specific M-estimator has to be determined. For monotone and redescending M-estimators $\sqrt{t} \cdot \psi(t)$ will be always bounded, hence, the constant $z_{0}$ can be found by solving
\begin{equation}
	z_{0}^{2} = \argmax_{t} \left| \sqrt{t} \cdot \psi(t) \right|.
\end{equation}
Results for some exemplary M-estimators are depicted in \cref{tb:SC_m_est}. As the coordinate-wise M-estimator performs a coordinate-wise aggregation, the \gls{scm} vector can also be calculated in a coordinate-wise manner. The initial outlier values for each dimension $m$ are calculated by inverting the initial normalization step performed by the M-estimator, i.e., 
\begin{equation}
	z_{m}^{\textrm{init}} = z_0 \cdot \textrm{mad}(\mathcal{Y}_{m}) + \textrm{median}(\mathcal{Y}_{m})
\end{equation}
where $\textrm{mad}(\cdot)$ denotes the median absolute deviation. To obtain the final outlier value, the initial outlier values have to be added to the honest data as $\mathcal{Z}_{m}^{\textrm{init}} = \{z_{m}^{\textrm{init}} \cdot \mathds{1}_{P}\}$, resulting in
\begin{equation}
	z_{m}^{\star} = z_0 \cdot \textrm{mad}(\mathcal{Y}_{m} \cup \mathcal{Z}_{m}^{\textrm{init}}) + \textrm{median}(\mathcal{Y}_{m} \cup \mathcal{Z}_{m}^{\textrm{init}}).
\end{equation}
Collecting $z_{m}^{\star}$ for each dimension into a vector, we obtain the \gls{scm} vector $\bz^{\star}$. As the \gls{sc} curves for M-estimators are symmetric, there exist two solutions per dimension, i.e. $z_{m}^{\star}$ and $- z_{m}^{\star}$, hence, in total $2^r$ candidate vectors exist. For each iteration the same candidate vector should be selected, to allow for an accumulation of the attack.

\begin{table}[t]
	\centering
	\begin{tblr}{colspec={lcccc}, rowsep=5pt}
		\toprule
		& t & Huber & Tukey & Talwar\\
		\midrule
		$z_{0}^{2}$ & $\nu$	& $c^{2}$ & $\frac{c^{2}}{5}$ & $c^{2}$ \\
		\bottomrule
	\end{tblr}
	\caption{Values of $z_{0}^{2}$ to calculate the optimal outlier for M-estimators.}
	\label{tb:SC_m_est}
\end{table}

\emph{Multivariate M-estimators:}
Again, the initial normalization step done by the M-estimator has to be inverted. The scatter matrix is approximated by a diagonal matrix with the median absolute deviations for each dimension on the diagonal,
\begin{equation}
	\bS_{0} = \bI \cdot \textrm{mad}(\mathcal{Y}).
\end{equation}
Then the scatter matrix is decomposed via eigendecomposition for symmetric matrices as
\begin{equation}
	\bS_{0} = \bV \bLambda \bV^{\top} = \left(\bV \sqrt{\bLambda} \bV^{\top}\right)^{2}
\end{equation}
where the columns of the orthogonal matrix $\bV$ contain the eigenvectors of $\bS_{0}$ and $\bLambda$ is a diagonal matrix with the eigenvalues of $\bS_{0}$ on its diagonal. The initial outlier can then be calculated by
\begin{equation}
	\bz^{\textrm{init}} = z_0 \cdot \bV \sqrt{\bLambda} \bV^{\top} \bd + \textrm{median}(\mathcal{Y})
\end{equation}
where $\bd$ is a direction vector with $\lVert\bd\rVert = 1$. The initial outlier set can then be created as $\mathcal{Z}^{\textrm{init}} = \{\bz^{\textrm{init}} \cdot \mathds{1}_{P}\}$. Repeating the above steps once with the contaminated set $\{\mathcal{Y} \cup \mathcal{Z}^{\textrm{init}}\}$, leads to the optimal outlier $\bz^{\star}$. To ensure an accumulation of the attack over time, the direction vectors at different time indices have to be positively aligned.

\emph{IOS:} The \gls{scm} attack for \gls{ios} can be approximated by finding the last discarded weight vector from the aggregation procedure. This weight vector could be directly injected into the aggregation procedure, but this would not control the alignment over time. Therefore the attacker should craft an attack vector with the same distance but a controlled direction with regard to the aggregation result.

\emph{SCC:} For the parameter combination $\theta = \pi$ and $\gamma \geq \tau_{k,i}$, \gls{rop} follows \cref{th:SCM} and therefore maximizes the \gls{sc} of \gls{scc}. But the attack will not necessarily accumulate over time as the direction $\bDelta_{jk,i}^{\mathcal{H}}$ in Equation~\eqref{eqn:ropref} does randomly fluctuate. Finding an attack value which also holds for \cref{th:ascm} is not possible, because the value which maximizes the \gls{scm} in each round is not aligned over time. Therefore, we cannot expect \gls{rop} with these parameters to be the most powerful attack.

When using the implementation of \gls{rop} proposed in \cite{Ozfatura.2024}, where $\bphi_{k,i}$ in Equation~\eqref{eqn:ropref} was replaced with $\bw_{k,i-1}$, the direction of $\bDelta_{jk,i}^{\mathcal{H}}$ will not randomly fluctuate. Hence, the attack values are aligned over time, but the attack values will not maximize the \gls{sc}. Using $\theta = \pi$ in this implementation does not change the update direction and only slows down convergence. Therefore the authors of \cite{Ozfatura.2024} suggest $\theta = \frac{\pi}{2}$, which leads to an orthogonal vector to $\bDelta_{jk,i}^{\mathcal{H}}$. This orthogonal vector inflicts a large perturbation on the update direction, which might explain its effectiveness.

Finding an optimal attack for \gls{scc} seems to be non trivial, as fulfilling both theorems exactly is not possible. An optimal attack for \gls{scc} would have to find an attack which trades-off alignment over time, maximizing the \gls{sc} and the amount of change in the update direction. We leave this as an open research question.

\section{Simulations}
\label{sec:simulations}

The simulations are performed on a network with $K=30$ agents, arranged in a Erdős–Rényi graph with an edge probability of 70\%. The following assumptions are ensured for every graph, which are commonly found in the literature:
\begin{itemize}
	\item The majority of the agents in the graph are benign, hence, for the global contamination rate, it holds $\varepsilon < 0.5$.
	\item The majority of each neighborhood $\mathcal{N}_{k}$ is benign, hence, for the local contamination rate, it holds $\varepsilon_{k} < 0.5$.
\end{itemize}

\subsection{Choice of Parameters}

Most aggregators rely on some kind of tuning parameter to trade-off robustness and efficiency. In general we have chosen parameters which are commonly used in the literature and which achieve a good trade-off. We have set $\alpha = 0.0688$ for the coordinate-wise $\alpha$-Trimmed-Mean, $c = 4.685$ for Tukey, $c = 2.7955$ for Talwar \cite{Schroth.2023}, $q = 0.8$ which leads to $c^{2} = F_{\chi_{r}^{2}}^{-1}(q)$ for Huber \cite{Schroth.2021}. For \gls{scc}, we use an adaptive $\tau_{k,i}$ \cite{He.2023} and a fixed $\tau = 0.1$ \cite{Ozfatura.2024}. MixTailor is composed out of Median and coordinate-wise Tukey with the same parameters as above. In \gls{ios} the number of discarded samples is equal to the true number of Byzantine neighbors $|\mathcal{B}_{k}|$. In Multi-Krum $m = |\mathcal{N}_{k}| - |\mathcal{B}_{k}|$ to achieve the highest efficiency \cite{Blanchard.2017}.

For the \gls{lv} attack, the scaling factor is set to $\gamma = 1000$. In \gls{rop}, it is sufficient to set $\gamma \geq \tau$, hence, we set $\gamma = 10$ with the attack angles $\theta = \{\pi, \frac{\pi}{2}\}$. The \gls{sascm} attacks are crafted such that they fulfill \cref{def:sascm}. For direction independent \glspl{sc}, a scaled vector of ones is chosen to achieve alignment over time.

\subsection{Linear Regression}
Each agent $k$ observes a local linear model of the form
\begin{equation}
	\boldsymbol{d}_{k} = \boldsymbol{u}^{\top}_{k} \bw^{\circ} + \boldsymbol{v}_{k}
\end{equation}
with the regressors $\boldsymbol{u}_{k} \in \mathbb{R}^{10 \times 1}$ being independently and identically distributed as $\boldsymbol{u}_{k} \sim \mathcal{N}(0, \bI_{10})$. The noise is distributed as $\boldsymbol{v}_{k} \sim \mathcal{N}(0, \sigma^2_{v}) \in \mathbb{R}^{10 \times 1}$ with $\sigma^2_{v} = 0.01$ and the learning rate is set to $\mu = 0.05$. Each agent employs a Huber loss function which guarantees a finite gradient $||\nabla J_{k}(\bw)|| < \infty$.

\subsection{MNIST Dataset}

To simulate a classification problem, we use the \gls{mnist} dataset \cite{LeCun.1998}. The training and test data is split randomly in chunks of the same size, which are distributed to each agent. The deployed batch size is $32$ with a constant learning rate of $\mu = 0.1$ and a cross-entropy loss function.

\subsection{Results}
\subsubsection{Baseline without attack}
In \cref{fig:sim_no_out}, we establish a base-line training loss for all aggregation schemes on linear regression without outliers. We can observe the fastest convergence speed for Mean aggregation, closely followed by \gls{scc}, Multi-Krum, \gls{ios} and $\alpha$-Trimmed-Mean. The scenario without aggregation also converges fast, but cannot reach the loss level of the other aggregation schemes. These results indicate the performance gain by using distributed algorithms. We notice that the multivariate M-estimators Talwar and Tukey, take much longer to converge. As the number of samples per aggregation step in comparison to the dimension is relatively low, we assume that there could be some dimensionality issues. All other aggregators take roughly the same time, with the coordinate-wise M-estimators being faster than the Median and Geometric Median, which can be attributed to the higher efficiency of the coordinate-wise M-estimators. This results suggest that the deployment of coordinate-wise M-estimators is more favorable than multivariate M-estimators.

For the \gls{mnist} data, the baseline accuracy without outliers is shown in \cref{fig:sim_mnist_no_out}. The highest accuracies can be observed for mean, \gls{scc}, Multi-Krum, \gls{ios} and $\alpha$-Trimmed-Mean at around $98\%$. Median and Geometric Median exhibit a slightly lower accuracy at around $96\%$. We cannot observe a large discrepancy between coordinate-wise and multivariate M-estimators, as observed previously for linear regression. The multivariate Huber M-estimator exhibits the best accuracy with around $98\%$ and the multivariate Talwar M-estimator the worst accuracy with about $96\%$, with the remaining M-estimators in-between.

\begin{figure}
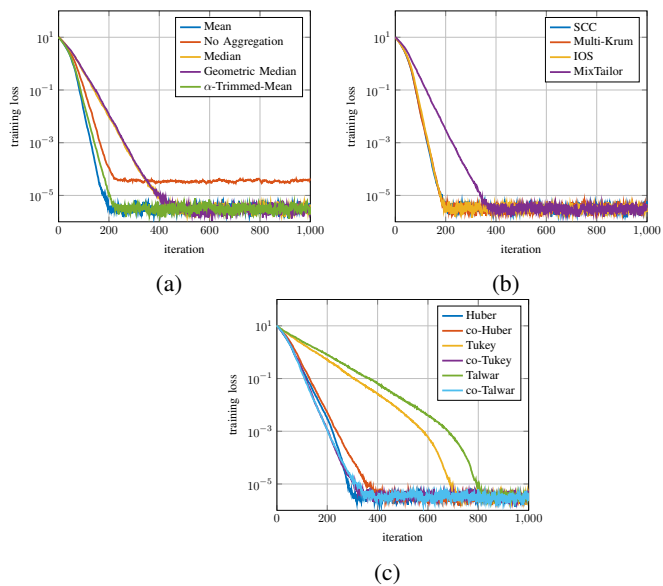

	\centering
	\hfil
	\subcaptionbox{%
		\label{fig:sim_no_out_1}}[.49\columnwidth]{\resizebox{0.49\columnwidth}{!}{\input{figs/sim-linear-no_out-1.tex}}}
	\hfil
	\subcaptionbox{%
		\label{fig:sim_no_out_2}}[.49\columnwidth]{\resizebox{0.49\columnwidth}{!}{\input{figs/sim-linear-no_out-2.tex}}}
	\hfil
	\\
	\hfil
	\subcaptionbox{%
		\label{fig:sim_no_out_3}}[.49\columnwidth]{\resizebox{0.49\columnwidth}{!}{\input{figs/sim-linear-no_out-3.tex}}}
	\hfil
	\caption{Training loss for all aggregation schemes without malicious agents for linear regression.}
	\label{fig:sim_no_out}
\end{figure}

\begin{figure}
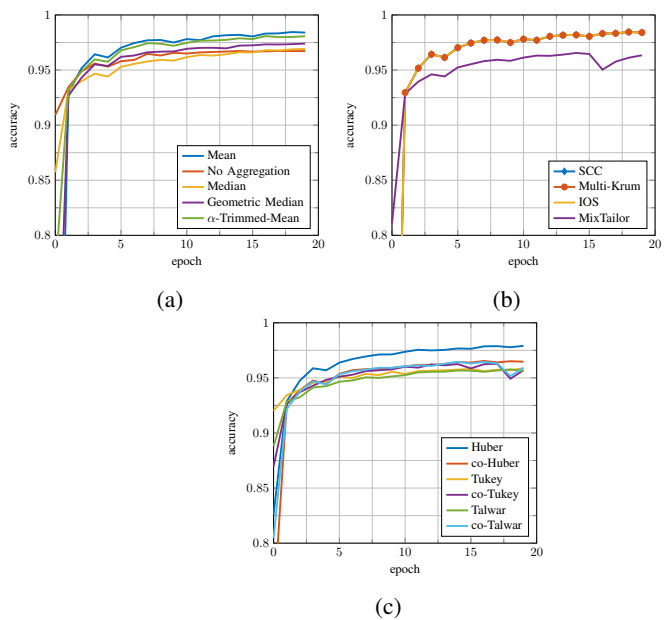

	\centering
	\hfil
	\subcaptionbox{%
		\label{fig:sim_mnist_no_out_1}}[.49\columnwidth]{\resizebox{0.49\columnwidth}{!}{\input{figs/sim-mnist-no_out-1.tex}}}
	\hfil
	\subcaptionbox{%
		\label{fig:sim_mnist_no_out_2}}[.49\columnwidth]{\resizebox{0.49\columnwidth}{!}{\input{figs/sim-mnist-no_out-2.tex}}}
	\hfil
	\\
	\hfil
	\subcaptionbox{%
		\label{fig:sim_mnist_no_out_3}}[.49\columnwidth]{\resizebox{0.49\columnwidth}{!}{\input{figs/sim-mnist-no_out-3.tex}}}
	\hfil
	\caption{Accuracy for all aggregation schemes without malicious agents for \gls{mnist} data.}
	\label{fig:sim_mnist_no_out}
\end{figure}

\subsubsection{Analysis of robust aggregators under SASCM attack}
We compare our proposed \gls{sascm} attack with the \gls{lv} and \gls{alie} attack, which are well-known in the literature. The details of the attacks are given in \cref{sec:attacks}. If not stated otherwise, the \gls{lv} attack was implemented according to \cref{eqn:lv_ones}. In \cref{fig:sim_ios,fig:sim_huber}, the results of attacking \gls{ios} and the (coordinate-wise) Huber M-estimator are shown. It can be observed, that the \gls{sascm} attack inflicts the largest perturbation on the training loss, whereas the influence of the \gls{lv} and \gls{alie} attacks are low. When attacking \gls{scc}, we will use \gls{rop}, because based on the implementation and chosen angle, it can be used to approximate an \gls{sascm} attack. \gls{rop} from Equation~\eqref{eqn:ropref} with $\theta = \pi$ maximizes the \gls{sc}, therefore is an \gls{scm} attack, but it is less strict on aligning the attack over time. Whereas \gls{rop} from \cite{Ozfatura.2024} has a better alignment over time, but does only approximately maximize the \gls{sc}. In \cref{fig:sim_scc}, the largest influence can be observed by the \gls{lv} attack, which can be explained by the fact that an adaptive $\tau$ is used, which is more susceptible to large attack values. Good results can also be observed for \gls{rop} with $\theta = \frac{\pi}{2}$. \gls{rop} with $\theta = \pi$ shows a worse performance because it can be seen as only decreasing the learning rate and not introducing a perturbation.

In \cref{fig:sim_mnist_ios}, \gls{ios} and in \cref{fig:sim_mnist_huber}, the (coordinate-wise) Huber M-estimator based on the \gls{mnist} dataset are attacked. In both figures it can be clearly observed that \gls{sascm} inflicts the largest reduction in accuracy. For the (coordinate-wise) Huber M-estimator in \cref{fig:sim_mnist_huber_6}, \gls{sascm} is able to reduce the accuracy to random guessing.

\begin{figure}
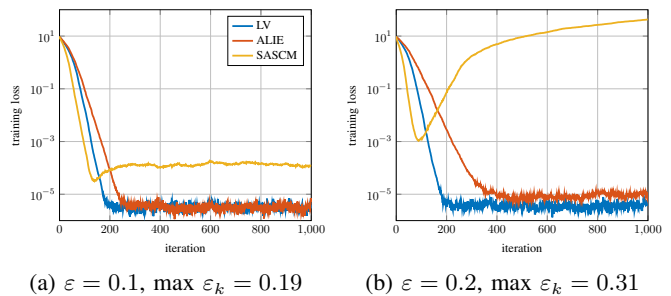

	\centering
	\hfil
	\subcaptionbox{$\varepsilon = 0.1$, max $\varepsilon_{k} = 0.19$ \label{fig:sim_ios_3}}[.49\columnwidth]{\resizebox{0.49\columnwidth}{!}{\input{figs/sim-linear-ios-3.tex}}}
	\hfil
	\subcaptionbox{$\varepsilon = 0.2$, max $\varepsilon_{k} = 0.31$
		\label{fig:sim_ios_6}}[.49\columnwidth]{\resizebox{0.49\columnwidth}{!}{\input{figs/sim-linear-ios-6.tex}}}
	\hfil
	\caption{\gls{lv}, \gls{alie} and \gls{sascm} attack on \gls{ios} for linear regression.}
	\label{fig:sim_ios}
\end{figure}

\begin{figure}
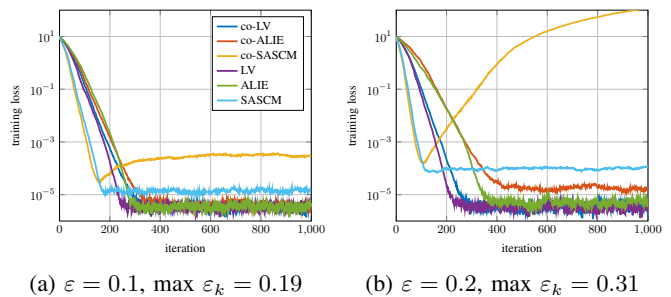

	\centering
	\hfil
	\subcaptionbox{$\varepsilon = 0.1$, max $\varepsilon_{k} = 0.19$ \label{fig:sim_huber_3}}[.49\columnwidth]{\resizebox{0.49\columnwidth}{!}{\input{figs/sim-linear-huber-3.tex}}}
	\hfil
	\subcaptionbox{$\varepsilon = 0.2$, max $\varepsilon_{k} = 0.31$
		\label{fig:sim_huber_6}}[.49\columnwidth]{\resizebox{0.49\columnwidth}{!}{\input{figs/sim-linear-huber-6.tex}}}
	\hfil
	\caption{\gls{lv}, \gls{alie} and \gls{sascm} attack on (coordinate-wise) Huber M-estimator for linear regression.}
	\label{fig:sim_huber}
\end{figure}

\begin{figure}
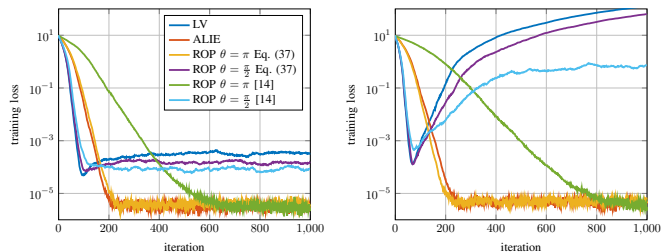

	\centering
	\hfil
	\subcaptionbox{$\varepsilon = 0.1$, max $\varepsilon_{k} = 0.19$ \label{fig:sim_scc_3}}[.49\columnwidth]{\resizebox{0.49\columnwidth}{!}{\input{figs/sim-linear-scc-3.tex}}}
	\hfil
	\subcaptionbox{$\varepsilon = 0.2$, max $\varepsilon_{k} = 0.31$
		\label{fig:sim_scc_6}}[.49\columnwidth]{\resizebox{0.49\columnwidth}{!}{\input{figs/sim-linear-scc-6.tex}}}
	\hfil
	\caption{\gls{lv}, \gls{alie} and \gls{rop} attack on \gls{scc} with an adaptive $\tau_{k,i}$ for linear regression. \gls{rop} from Equation~\eqref{eqn:ropref} with $\theta = \pi$ maximizes the \gls{sc}.}
	\label{fig:sim_scc}
\end{figure}

\begin{figure}
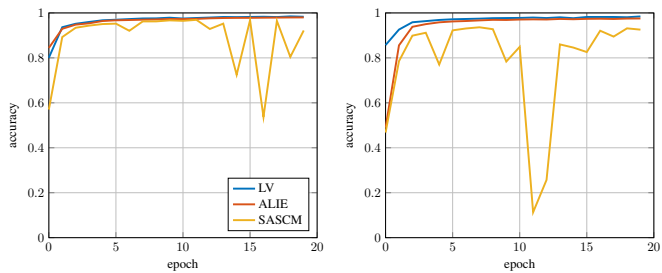

	\centering
	\hfil
	\subcaptionbox{$\varepsilon = 0.1$, max $\varepsilon_{k} = 0.19$ \label{fig:sim_mnist_ios_3}}[.49\columnwidth]{\resizebox{0.49\columnwidth}{!}{\input{figs/sim-mnist-ios-3.tex}}}
	\hfil
	\subcaptionbox{$\varepsilon = 0.2$, max $\varepsilon_{k} = 0.31$
		\label{fig:sim_mnist_ios_6}}[.49\columnwidth]{\resizebox{0.49\columnwidth}{!}{\input{figs/sim-mnist-ios-6.tex}}}
	\hfil
	\caption{\gls{lv}, \gls{alie} and \gls{sascm} attack on \gls{ios} for \gls{mnist} data.}
	\label{fig:sim_mnist_ios}
\end{figure}

\begin{figure}
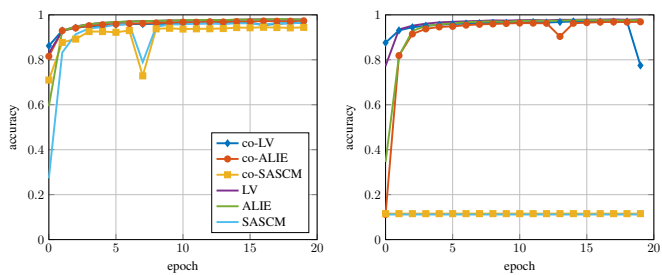

	\centering
	\hfil
	\subcaptionbox{$\varepsilon = 0.1$, max $\varepsilon_{k} = 0.19$ \label{fig:sim_mnist_huber_3}}[.49\columnwidth]{\resizebox{0.49\columnwidth}{!}{\input{figs/sim-mnist-huber-3.tex}}}
	\hfil
	\subcaptionbox{$\varepsilon = 0.2$, max $\varepsilon_{k} = 0.31$
		\label{fig:sim_mnist_huber_6}}[.49\columnwidth]{\resizebox{0.49\columnwidth}{!}{\input{figs/sim-mnist-huber-6.tex}}}
	\hfil
	\caption{\gls{lv}, \gls{alie} and \gls{sascm} attack on (coordinate-wise) Huber M-estimator for \gls{mnist} data.}
	\label{fig:sim_mnist_huber}
\end{figure}

\subsubsection{Analysis of different alignment strategies}
In \cref{fig:sim_ones}, we simulate an \gls{lv} attack on a Median aggregator with different alignment strategies as shown in \cref{sec:attacks}. When choosing a random direction every time step, we can observe the smallest divergence of the training loss. This result is expected because in this scenario the attack perturbations cannot accumulate over time. Choosing the inverse honest update direction, leads to significant slow-down of the learning procedure for small contamination rates and a complete stop in learning for larger contamination rates.

\begin{figure}
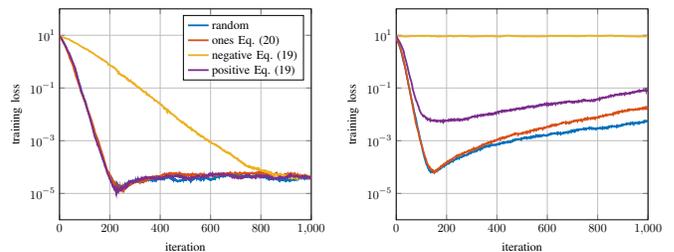

	\centering
	\hfil
	\subcaptionbox{$\varepsilon = 0.1$, max $\varepsilon_{k} = 0.19$ \label{fig:sim_ones_3}}[.49\columnwidth]{\resizebox{0.49\columnwidth}{!}{\input{figs/sim-linear-ones-3.tex}}}
	\hfil
	\subcaptionbox{$\varepsilon = 0.2$, max $\varepsilon_{k} = 0.31$
		\label{fig:sim_ones_6}}[.49\columnwidth]{\resizebox{0.49\columnwidth}{!}{\input{figs/sim-linear-ones-6.tex}}}
	\hfil
	\caption{\gls{lv} attacks on Median for linear regression with different alignment strategies.}
	\label{fig:sim_ones}
\end{figure}

\section{Conclusion}
\label{sec:conclusion}
We analyzed robust aggregation schemes using the \gls{sc} and the proposed \gls{scm}, \gls{ascm} and \gls{sascm} to craft powerful attacks for every robust aggregation scheme. The general idea is to find an outlier which maximizes the \gls{sc}, while maintaining a positive alignment over time for the different outliers. The simulations show that the combination of \gls{scm} and accumulation over time, i.e. \gls{ascm} or \gls{sascm}, leads to a breakdown of the considered robust aggregation schemes. In conclusion, developing a robust aggregation scheme which is absolutely robust seems to be very difficult, if not impossible, as every aggregation scheme which attempts to incorporate information from neighboring agents can be affected by the proposed attacks.

\subsection{Future Work}
In \cref{sec:scm}, we have formalized that not only the distance, but also the direction of the outlier is important for the maliciousness of the attack. This has not been unnoticed in literature, e.g. in \cite{Raynal.2024}, it is stated that ``Distance is not a Proxy for Maliciousness''. Especially, for \gls{scc}, we have noticed that the reference point is less important than the position of the outlier with regard to the update direction. Hence, it would be interesting to further investigate the influence of the attack direction on the aggregation result. Additionally, investigating different distance measures in \cref{def:opt_attack} would be interesting.

\printbibliography

\end{document}